\documentclass{article}

\usepackage{arxiv}

\usepackage[utf8]{inputenc} 
\usepackage[T1]{fontenc}    
\usepackage{hyperref}       
\usepackage{url}            
\usepackage{booktabs}       
\usepackage{amsfonts}       
\usepackage{nicefrac}       
\usepackage{microtype}      
\usepackage{lipsum}         
\usepackage{graphicx}
\usepackage{natbib}
\usepackage{doi}
\usepackage[dvipsnames,table]{xcolor}         
\usepackage{authblk}

\setcitestyle{numbers,square,comma}
\usepackage{graphicx}
\usepackage{comment}
\usepackage{bm}
\usepackage{wrapfig}
\usepackage{multirow}
\usepackage{enumitem}
\usepackage{tcolorbox}
\usepackage{amsthm}
\usepackage{mathtools}
\usepackage{subcaption}
\usepackage{amssymb}
\usepackage{cleveref}       
\tcbset{colback=gray!1!white, colframe=gray!50!black, fonttitle=\bfseries, width=\linewidth}
\theoremstyle{plain}
\newtheorem{theorem}{Theorem}[section]

\newtheorem{assumption}[theorem]{Assumption}
\newtheorem{example}[theorem]{Example}
\usepackage{bm, amsmath}

\DeclareMathOperator*{\argmin}{arg\,min}

\def\bigO{\mathcal{O}}

\def\1{\bm{1}}
\def\0{\bm{0}}

\newcommand{\E}{\mathbb{E}}
\newcommand{\given}{\,\middle|\,}
\newcommand{\Var}{\mathrm{Var}}

\def\Figref#1{Fig.~\ref{#1}}

\def\Secref#1{Sec.~\ref{#1}}
\def\eqref#1{Eq.~\ref{#1}}
\def\Eqref#1{Eq.~\ref{#1}}

\def\tabref#1{Tab.~\ref{#1}}
\def\Tabref#1{Tab.~\ref{#1}}

\def\Appref#1{App.~\ref{#1}}

\newcommand{\ie}{\textnormal{i.e., }}
\newcommand{\eg}{\textnormal{e.g., }}
\newcommand{\wrt}{\textnormal{w.r.t.\ }}

\def\vb{{\bm{b}}}

\def\ve{{\bm{e}}}

\def\vh{{\bm{h}}}

\def\vm{{\bm{m}}}

\def\vr{{\bm{r}}}

\def\vx{{\bm{x}}}

\def\vz{{\bm{z}}}

\def\mW{{\bm{W}}}

\DeclareMathAlphabet{\mathsfit}{\encodingdefault}{\sfdefault}{m}{sl}
\SetMathAlphabet{\mathsfit}{bold}{\encodingdefault}{\sfdefault}{bx}{n}

\def\gC{{\mathcal{C}}}

\def\gG{{\mathcal{G}}}

\def\gL{{\mathcal{L}}}

\def\gN{{\mathcal{N}}}

\def\gP{{\mathcal{P}}}

\def\gU{{\mathcal{U}}}

\def\sC{{\mathbb{C}}}

\def\sM{{\mathbb{M}}}
\def\sN{{\mathbb{N}}}

\def\sP{{\mathbb{P}}}

\def\sR{{\mathbb{R}}}

\usepackage[acronym, nohypertypes={acronym}]{glossaries}    

\newacronym{sota}{SOTA}{state-of-the-art}
\newacronym{mlp}{MLP}{\textit{multilayer perceptron}}
\newacronym{rnn}{RNN}{\textit{recurrent neural network}}
\newacronym{tcn}{TCN}{\textit{temporal convolutional network}}
\newacronym{gru}{GRU}{\textit{gated recurrent unit}}
\newacronym{lru}{LRU}{\textit{linear recurrent unit}}
\newacronym{llm}{LLM}{\textit{large language model}}
\newacronym{nlp}{NLP}{\textit{natural language processing}}
\newacronym{icl}{ICL}{\textit{in-context learning}}

\newacronym{mae}{MAE}{\textit{mean absolute error}}
\newacronym{mse}{MSE}{\textit{mean squared error}}
\newacronym{rmse}{RMSE}{\textit{root mean squared error}}

\newacronym{flops}{FLOPs}{\textit{FLoating Point Operations per second}}
\newacronym{nar}{NAR}{\textit{nonlinear autoregressive}}
\newacronym{ar}{AR}{\textit{autoregressive}}

\newacronym{gpi}{GPI}{\textit{generative process identification}}
\newacronym{cf}{CF}{\textit{conditional forecasting}}

\glsdisablehyper

\newglossaryentry{cer}{name=CEREn,description=}
\newglossaryentry{climate-t}{name=ClmUS-T,description=}
\newglossaryentry{climate-r}{name=ClmUS-R,description=}
\newglossaryentry{largest}{name=LargeST-SD,description=}
\graphicspath{ {./figures/} }

\title{Why Do Time Series Models Need Long Context Windows?}
\date{}

\author[1]{%
  Luca Butera
}
\author[1]{%
  Giovanni De Felice
}
\author[2]{%
  Andrea Cini
}
\author[1,3]{%
  Cesare Alippi
}
\affil[1]{Università della Svizzera Italiana}
\affil[2]{EPFL}
\affil[3]{Politecnico di Milano}

\renewcommand{\headeright}{Preprint}
\renewcommand{\undertitle}{Preprint}
\renewcommand{\shorttitle}{Why Do Time Series Models Need Long Context Windows?}

\begin{document}

\maketitle

\begin{abstract}
  Modern deep learning models for forecasting \textit{groups of time series} rely on increasingly longer observation windows. However, the benefit of increasing the window size is often simply attributed to capturing long-range dependencies, and broader discussion on how \textit{global} forecasting models leverage input observations has been limited. In this paper, we show that forecasting groups of time series involves two objectives: (i)~\textit{generative process identification}~(GPI), i.e., inferring the specific process generating the input sequence, and (ii)~\textit{conditional forecasting}~(CF), i.e., predicting future values given input observations. From this perspective, optimal predictions can be interpreted as an average over plausible data-generating processes, weighted by their likelihood given the input window. This suggests another explanation for the benefits of long context windows: they reduce the uncertainty about which specific process is generating the input time series during operation. We prove that even for processes with memory length $P$, an input window size strictly larger than $P$ is \textit{necessary} to achieve the minimum attainable error. Finally, we show how decoupling GPI and CF can improve computational scalability without compromising accuracy. Experiments on synthetic and real-world data validate our insights and their relevance for designing forecasting architectures.
\end{abstract}

\section{Introduction}\label{sec:introduction}
\begin{wrapfigure}[19]{r}{0.51\textwidth}
    \centering
    \vspace*{-0.15cm}
    \includegraphics[width=\linewidth]{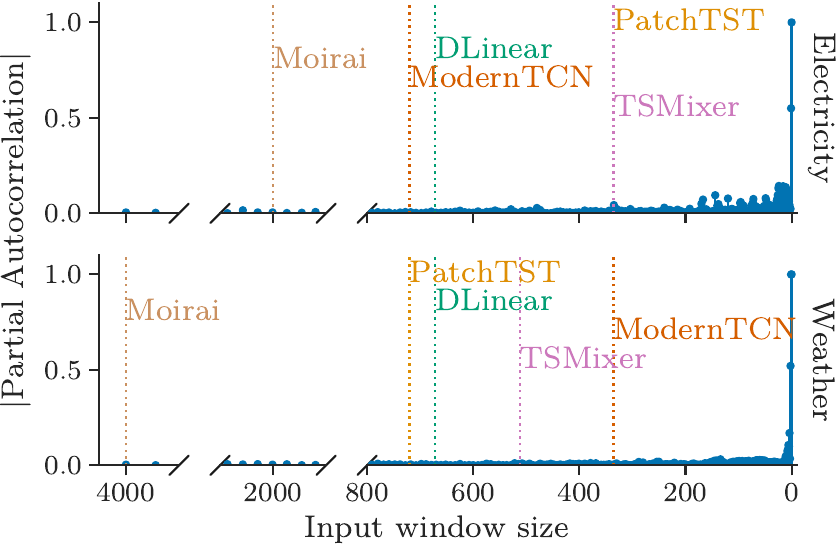}
    \vspace*{-0.55cm}
    \caption{Partial autocorrelation at different time lags in two widely used forecasting benchmarks~\citep{wu2021autoformer}. Vertical lines indicate the best window length for popular forecasting models.}
    \label{fig:pacf_vs_window}
\end{wrapfigure}
Deep learning approaches have become primary tools in forecasting large collections of time series~\citep{benidis2022deep, wang2024deep}. While the standard statistical approach is to independently fit a \textit{local} model for each target time series~\citep{box1968some, degooijer2006years}, more recent advancements show that a single, shared, \textit{global} model, can be effective across multiple series~\citep{hewamalage2021recurrent, salinas2020deepar, makridakis2020m4, montero2021principles}. 
In this context, recent research has focused on architectures capable of handling increasingly longer observation windows~\citep{haoyietal2021informer,nie2022time,zhou2022fedformer,liu2022pyraformer}, typically attributing performance improvements to the ability to capture long-range dependencies. 
While global models have been used \textit{transductively} to forecast future observations of the training time series, several time series \textit{foundation} models have been recently introduced~\citep{das2024decoder, woo2024unified, goswami2024moment, liang2024foundation}. These are global models that aim to \textit{inductively} forecast previously unseen time series.
For this purpose, they are trained on vast corpora of time series coming from different domains, and usually rely on very long input windows and high model complexity. 
Interestingly, as \Figref{fig:pacf_vs_window} illustrates, in recent forecasting architectures, the window size selected with hyperparameter tuning is considerably larger than the number of time lags one would consider relevant, \eg by looking at autocorrelation. 
Indeed, \citet{montero2021principles} observed that transductive global models inherently require longer windows and higher capacity than local counterparts to guarantee the same performance. 
Beyond time series forecasting, there have been extensive efforts to understand how models trained on multiple tasks transfer to a new one given sufficient context~\citep{xie2022an,akyurek2023what,li2023transformers,zhang2025whatandhow}, a capability known as \gls{icl}~\citep{dong2024survey}. Nonetheless, discussion on how global forecasting models leverage input data beyond modeling temporal dependencies has been limited. 

In this paper, we aim to fill this void and show that training a model to minimize forecasting error over groups of time series can be seen as involving two different tasks: (i)~\gls{gpi}, \ie inferring the generative process that produced the input observations, and (ii)~\gls{cf}, \ie predicting the next observations conditioned on the past. While most architectures do not explicitly decouple these tasks, as we will show, they can be viewed as inherent in how global models are trained and used in practice, regardless of whether they are used \textit{transductively} or \textit{inductively}. 
We show that optimal predictions can be interpreted as a \textit{posterior average} over the possible generative processes, \ie an average of forecasts for all plausible processes, each weighted by the estimated likelihood of that process generating the input sequence. This has important theoretical and methodological implications. 
Our main novel contributions and findings are as follows.
\begin{itemize}[leftmargin=0.5em, itemindent=0.25em, labelsep=0.25em, itemsep=0cm, topsep=0cm, parsep=0.1em]
\item \textbf{Problem formulation.} 
We formulate the task of learning a global forecasting model as learning from data generated by multiple generative processes from a common parametric family (\Secref{sec:setting}). This formulation provides a framework for characterizing time series foundation models and understanding the associated learning problem.

\item \textbf{The role and cost of \gls{gpi}.}
We provide theoretical and empirical evidence showing that, especially for foundation models, the input length and model capacity required for accurate forecasts are driven by \gls{gpi}. We show that, in order to address \gls{gpi}, global models may require an input window length \textit{strictly larger} than that which would be needed to forecast a single time series~(\Secref{sec:theorem1}). Moreover, higher uncertainty about which specific process is generating the input time series must be mitigated by increasing the window size. We show how, in foundation models, this can induce worse performance-cost trade-offs \wrt domain-specific global models~(\Secref{sec:heterogeneous_vs_window}). 

\item \textbf{Methodology.} We show that decoupling \gls{gpi} and \gls{cf} enables novel designs that can, \eg improve scalability by amortizing the cost of \gls{gpi} through dedicated model components~(\Secref{sec:methodology}). 
\end{itemize}
We support our analysis with theoretical results and empirical evidence on both synthetic and real-world datasets. 
Ultimately, we argue that in time series foundation models, \gls{icl} can be seen as leveraging the input window to reduce uncertainty about which process generated the data.
We believe our findings can inform the design of the next generation of time series foundation models, leading to more efficient and principled approaches that exploit the specifics of the \gls{gpi} and \gls{cf} tasks. 
The paper is organized as follows: \Secref{sec:setting} introduces the proposed problem formulation; \Secref{sec:theory} introduces the \gls{gpi}/\gls{cf} perspective, provides theoretical and empirical evidence in its support, and demonstrates its impact on global and foundation models; \Secref{sec:methodology} assesses a methodology to incorporate our insights in model design.

\section{Problem formulation}\label{sec:setting}

We begin by defining the problem setting and the associated system model. Note that, throughout the paper, with a slight abuse of notation, we use the same symbols to denote both random variables and their realizations.

\paragraph{Problem setting.} Consider a set $\gC$ of $N$ time series,
\[
\gC = \{\vx^i_{0:T^i} \mid i \in \left[1,N\right]\}
\]
where the $i$-th time series $\vx^i_{0:T^i} \coloneq \left(x^i_0, x^i_1, \dots, x^i_{T^i-1}\right)$ consists of a sequence of $T^i$ real-valued univariate observations. The set $\gC$ can be \textit{heterogeneous}, \ie can contain time series from different application domains~(\eg transportation, energy, climatology) and measuring different quantities. We focus on the univariate case for simplicity; the formulation can be extended to the multivariate case by considering architectures, \eg \citep{ansari2025chronos}, that can handle an arbitrary number of input channels. 

\paragraph{Time series forecasting.}
The objective is to learn a parametric model $F(\cdot; \boldsymbol{\Theta})$ that, given a window of observations $\vx^n_{t-W:t}$ ($W\geq 1$), predicts $H$ future values $\vx^n_{t:t+H}$ as
\vspace*{-0.04cm}
\begin{equation}\label{eq:global_predictor}
    \hat{\vx}^n_{t:t+H} = F(\vx^n_{t-W:t}; \, \boldsymbol{\Theta}),
\end{equation}
where $\hat{\vx}^n_{t:t+H}$ denotes predictions, and $\boldsymbol{\Theta}$ are learnable parameters. 
We use the superscript $n$ to denote a generic target time series and cover both the transductive~(${\vx^n_{0:T^n} \in \gC}$) and inductive~(${\vx^n_{0:T^n} \notin \gC}$) scenarios. Note that the latter setting is typical of foundation models. 
Given a family of models, the objective is then to find the parameters' values that minimize the expected point-prediction loss
\vspace*{-0.03cm}
\begin{equation}\label{eq:expected_loss}
    \{\boldsymbol{\Theta}_{opt}\} = \argmin_{\boldsymbol{\Theta}} \gL_W(\boldsymbol{\Theta})
    = \argmin_{\boldsymbol{\Theta}} \mathbb{E}
    \!\left[ \ell\left(\vx^n_{t:t+H}, F\left(\vx^n_{t-W:t}; \boldsymbol{\Theta}\right)\right) \right]
\end{equation}
between the target $\vx^n_{t:t+H}$ and the prediction, as measured by loss function $\ell(\cdot)$, \eg \gls{mse} or \gls{mae}. In the following, we focus on \textit{global} forecasting models, i.e., models that are trained on data from all available $N$ time series, as opposed to \textit{local} models, which are trained on a single time series. Foundation models can be seen as a special case of global models specifically tailored to the inductive setting. 

\paragraph{System model.}
While alternative formulations are possible, we use one that covers a wide range of scenarios. We assume that the generic $n$-th target time series is generated by a process belonging to a family of stochastic processes $\gG$ characterized by a parameter vector $\boldsymbol{\phi}^n$. In particular, we assume the $n$-th time series is generated according to
\vspace*{-0.03cm}
\begin{equation}\label{eq:gen_process}
    x^n_t \sim \sP_x\!\left(x^n_t \vert \vx^n_{<t}; \boldsymbol{\phi}^n\right),
    \quad \text{ where } \quad
    \boldsymbol{\phi}^n \sim \boldsymbol{\gP}_{\boldsymbol{\phi}},
    \quad
    \forall t.
\end{equation}
Here, $\vx^n_{<t}\!\coloneq\!\left(x^n_{t-1}, x^n_{t-2}, \dots\right)$ and $\sP_x\!\left({}\cdot{}; \boldsymbol{\phi}^n\right)$ is a time-invariant parametric distribution. $\boldsymbol{\gP}_{\boldsymbol{\phi}}$ denotes the time-invariant target distribution over possible process parameters $\boldsymbol{\phi}^n$, defined on a $d_p$-manifold $\sM^{d_p}$. 
The processes generating the time series in $\gC$ are then associated with the set of i.i.d.\ random variables $\{\boldsymbol{\phi}^i\}_{i=1}^{N}$ distributed as $\boldsymbol{\gP}_{\boldsymbol{\phi}}$. Note that $\sP_x\!\left(\cdot\right)$ and $\boldsymbol{\gP}_{\boldsymbol{\phi}}$ are shared across all processes.
This system model accounts for the multitude of different dynamics that a global model aims to approximate. For example, we can consider different regions of $\sM^{d_p}$ to correspond to different domains~(\eg meteorology, transportation, utilities). For the data typically used to train foundation models, we can expect $\boldsymbol{\gP}_{\boldsymbol{\phi}}$ to have high variance and multiple modes, \eg one for each target domain. While for domain-specific applications, one could expect $\boldsymbol{\gP}_{\boldsymbol{\phi}}$ to have a smaller variance and result in time series with similar dynamics.

\section{The role of process identification}\label{sec:theory}

In this section, we analyze how a global model can leverage the input observations to produce accurate forecasts across different time series. 
To simplify the notation, we consider $1$-step ahead forecasting.

\subsection{Process identification and conditional forecasting}\label{sec:posterior-average}
Given the task defined in \Secref{sec:setting}, the objective is often to obtain predictions such that
\vspace*{-0.03cm}
\begin{equation}\label{eq:expected_value}
    \hat{\vx}^n_{t} \approx \E\left[x^n_t \given \vx^n_{t-W:t}\right],
\end{equation}
choosing the \gls{mse} for the loss function $\ell\left(\cdot\right)$~(\Eqref{eq:expected_loss})~\citep{gneiting2011making}.
Recall that each time series might be generated by a different stochastic process belonging to the family in \Eqref{eq:gen_process}. 
The expectation in \Eqref{eq:expected_value} can be rewritten using the law of total conditional expectation as
\vspace*{-0.03cm}
\begin{equation}\label{eq:posterior-average}
    \E\!\left[\E\!\left[x^n_t \!\given \vx^n_{t-W:t}, \boldsymbol{\phi}^n\right] \!\given \vx^n_{t-W:t}\right]
    =\!\int_{\sM^{d_p}}\! 
        \underbrace{\E\left[x^n_t \given \vx^n_{t-W:t}, \boldsymbol{\phi}^n\right]}_{\text{\gls{cf}}}
        \underbrace{p\left(\boldsymbol{\phi}^n \given \vx^n_{t-W:t}\right)}_{\text{\gls{gpi}}} d\boldsymbol{\phi}^n.
\end{equation}
\Eqref{eq:posterior-average} shows that optimal predictions can be seen as a \emph{Bayesian posterior average}~\citep{gelman1995bayesian} over the generating processes parameters. This highlights two coupled aspects of the forecasting problem:
\begin{itemize}[leftmargin=1em, itemsep=0cm, topsep=0cm, parsep=0.5em]
    \item \textbf{\Acrfull{gpi}:} the task of inferring the data-generating process, captured by the posterior $p\left(\boldsymbol{\phi}^n \given \vx^n_{t-W:t}\right)$, which quantifies the likelihood of each parameters set $\boldsymbol{\phi}^n$ given the observed sequence $\vx^n_{t-W:t}$.
    \item \textbf{\Acrfull{cf}:} the task of predicting the next observation conditioned on the past, captured by the conditional expectation $\E\left[x^n_t \given \vx^n_{t-W:t}, \boldsymbol{\phi}^n\right]$.
\end{itemize}
Looking at \Eqref{eq:posterior-average} we see how input information relates to \gls{cf} and \gls{gpi}. The entire input sequence would be useful for \gls{gpi}, as it can help contract the posterior $p\left(\boldsymbol{\phi}^n\given\cdot\right)$. Conversely, for the \gls{cf} task, part of the input might be redundant. Consider the case where, given $\boldsymbol{\phi}^n$, $x^n_t$ depends only on the most recent $w < W$ observations, where $W$ is the model's input window length. 
This suggests that \gls{gpi} can increase the number of input observations needed for accurate forecasts beyond that which would be required by \gls{cf} alone.
Moreover, \gls{cf} exploits the temporal dependencies between the input $\vx^n_{t-W:t}$ and the target $x^n_t$, thus the temporal distance between the two is relevant. In contrast, \gls{gpi} relies on the relationship between observations and the process generating them; consequently, ~(\eg if the process is stationary), it can be based on sequences non-adjacent to $x^n_t$. 
We can interpret \gls{gpi} as a form of \gls{icl} from sequences of observations~\citep{li2023transformers}.
Indeed, recent works on language modeling view \gls{icl} as an implicit Bayesian inference algorithm, where the model infers a latent variable characterizing the target task from input/output examples~\citep{xie2022an,zhang2025whatandhow}. 
Under this interpretation, global time series models perform \gls{icl} by leveraging the input time series to infer the target task~(\ie to perform \gls{gpi}). 
This links the window length $W$ to the number of \gls{icl} examples. In standard \gls{icl}, performance improves as this number increases~\citep{agarwal2024manyshot,brown2020language}. Here, longer input windows can improve forecasting accuracy by reducing uncertainty about which process generated the data. 

\begin{figure*}[t]
    \centering
    \includegraphics[width=\linewidth]{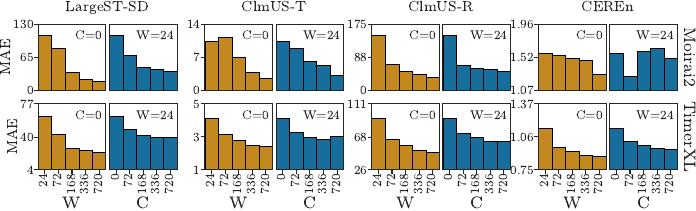}
    \vspace*{-0.25cm}
    \caption{Pre-trained foundation models \gls{mae}, $H\!\!=\!\!24$. For each pair of subplots, the left histogram~(orange) shows the error for increasing \textit{window} length $W$, without \textit{context}, while the right one~(blue) shows the error with the smallest $W$ and increasing \textit{context} length $C$.}
    \label{fig:foundation_context_bars}
    \vspace*{-0.0cm}
\end{figure*}
\paragraph{Evidence in foundation models} 
Under the model in \eqref{eq:gen_process}, \gls{gpi} and \gls{cf} are inherent to the problem of learning global models for groups of time series, and we would expect existing pre-trained foundation models to behave as if implicitly addressing both.
To assess this, we probe two recent foundation models, Moirai2~\citep{liu2025moirai} and TimerXL~\citep{liu2025timer}. Instead of the standard contiguous window of observations, we feed the models a sequence obtained by concatenating two distinct segments: a \textit{window} and a \textit{context}. The window is the standard input covering the observations immediately preceding time-step $t$~(\ie $\vx^n_{t-W:t}$), while the context is a sequence of length $C$ from the same time series, preceding the window and separated from it by a gap of $S$ time-steps~(\ie $\vx^n_{t-W-S-C:t-W-S}$). 
The rationale is that, if these models' performance is mainly due to capturing long-range temporal dependencies for \gls{cf}, using a short window would disrupt their accuracy, and adding context with a temporal gap would not be useful, as the temporal contiguity is broken.
However, if \gls{gpi} is a relevant factor, they would benefit from the additional context. 
We carry out this experiment on four real-world datasets: \gls{largest}~\citep{liu2023largest} (\textit{traffic}), \gls{cer}~\citep{ceren} (\textit{energy}), \gls{climate-t} and \gls{climate-r} (\textit{climate}, see \Appref{app:datasets}). 
\Figref{fig:foundation_context_bars} reports the \gls{mae} when using the standard input $\vx^n_{t-W:t}$ for increasing values of $W$ (blue bars), and compares it against the setup where the input is the concatenation of context and window, \ie $\left[\vx^n_{t'-C:t'} \Vert \vx^n_{t-W:t}\right]$. In this latter case, $W$ is fixed to $W\!=\!24$, and we increment the context length $C$. The context is taken \wrt $t'\!=\!t-W-S$, with $S\!=\!336$, corresponding to a separation of two weeks between context and window. Reasonably, such context should not carry relevant temporal dependencies \wrt the target $\vx^n_{t:t+H}$, while still being representative of the same process that generated the window~(assuming the process does not abruptly change). Across most datasets, the two strategies yield analogous performance trends~(example forecasts are reported in \Appref{app:complement_pretrained_foundation}).
As these models are trained on temporally contiguous inputs, performance gaps are expected; however,  these results suggest that both models use part of the input to infer the target data-generating process, and that -- in the considered datasets -- only a short window of the most recent observations can be enough for effective forecasting. 

\subsection{Process identification and window length}\label{sec:theorem1}
The analysis in \Secref{sec:posterior-average} shows that, in global models, increasing the window length $W$ can lead to more accurate forecasts by reducing uncertainty about which specific process is generating the $n$-th target time series. 
Indeed, \Eqref{eq:posterior-average} shows the role of the input window when forecasting over a group of time series generated by different processes. Optimal predictions can be seen as a weighted average across likely processes; longer observation windows allow for tailoring predictions to the target time series by contracting the posterior $p\left(\boldsymbol{\phi}^n \given \cdot\right)$. Note that this effect does not exist in local models trained on data coming from a single time series.

To illustrate the impact of \gls{gpi} on global forecasting tasks, we consider as a minimal example the case in which the target processes are simple Markovian processes of order $P$, with $P \in \sN^+$. In particular, we consider the sub-family of processes in~\eqref{eq:gen_process} such that the family $\gG$ is determined by
\begin{equation}\label{eq:markov_process}
    x^n_t=g\!\left(\vx^n_{t-P:t}; \boldsymbol{\phi}^n\right) + \epsilon_t^n,
    \quad \text{ where } \quad
    \boldsymbol{\phi}^n \sim \boldsymbol{\gP}_{\boldsymbol{\phi}},
    \text{ and }
    \epsilon_t^n \sim \boldsymbol{\gP}_{\epsilon},
    \quad
    \forall t.
\end{equation}
Here, $g\left({}\cdot{}; \boldsymbol{\phi}^n\right)$ is a well-behaved parametric function, such that each resulting process is stationary. Noise terms $\epsilon_t^n$ are i.i.d.\ , sampled from time-invariant distribution $\boldsymbol{\gP}_{\epsilon}$ with finite variance and zero mean. 
In this setting, $x^n_t$ clearly depends only on the most recent $P$ observations, and a window of length $P$ would capture all relevant temporal dependencies. 
 
\begin{assumption}\label{ass:non_zero_exp}
Assume that data are generated as in \Eqref{eq:markov_process} such that
\[
\E_{\vx^n_{t-P:t}}\!\left[
\Var_{\boldsymbol{\phi}^n}\!\left(\E_{\epsilon_t^n}\!\left[x_t^n \mid \vx^n_{t-P:t},\boldsymbol{\phi}^n\right]
\mid \vx^n_{t-P:t}\right)
\right] > 0.
\]
\end{assumption}
Assumption~\ref{ass:non_zero_exp} states that the expected value of the process variance given the last $P$ observations is greater than zero. Said differently, we assume that processes corresponding to different parameters $\boldsymbol{\phi}^n$ cannot be uniquely identified from the most recent $P$ observations and that, given such observations, there are plausible processes with different conditional expectations. This is a very reasonable assumption, especially when training on related~(noisy) time series.
\begin{tcolorbox}
\begin{theorem}{\textbf{\textup{[Necessity of $W>P$.]}}}\label{prop:necessity_larger_window}
Under the formulation in \Eqref{eq:markov_process}, and Assumption~\ref{ass:non_zero_exp}, let $F\left(\vx^n_{t-W:t}; \boldsymbol{\Theta}_{opt}\right)$ denote an optimal one-step-ahead predictor (in the sense of \Eqref{eq:expected_value}) for input window length $W$. 

\smallskip
Then, a window size $W > P$ is a \textbf{necessary} condition for $F\left(\vx^n_{t-W:t}; \boldsymbol{\Theta}_{opt}\right)$ to achieve the minimum expected error $\Var\left(\epsilon^n_t\right)$.
\end{theorem}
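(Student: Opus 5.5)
We argue by contraposition: we show that any window $W \le P$ gives an optimal predictor whose expected squared error is strictly larger than $\Var(\epsilon^n_t)$. Since the predictor is optimal in the sense of \Eqref{eq:expected_value}, its error is the minimum mean squared error attainable from the input $\vx^n_{t-W:t}$. It therefore suffices to lower bound
\[
\E\!\left[\left(x^n_t - \E\left[x^n_t \given \vx^n_{t-W:t}\right]\right)^2\right]
\]
for $W \le P$. We first reduce to the case $W = P$. For $W < P$, the $\sigma$-algebra generated by $\vx^n_{t-W:t}$ is contained in the one generated by $\vx^n_{t-P:t}$, so the tower property gives a mean squared error at least as large as for $W = P$. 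Hence it is enough to show that the error is strictly larger than $\Var(\epsilon^n_t)$ when $W = P$.

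For $W = P$, write $m(\vx,\boldsymbol{\phi}) \coloneq \E_{\epsilon}[x^n_t \mid \vx^n_{t-P:t}=\vx,\boldsymbol{\phi}^n=\boldsymbol{\phi}] = g(\vx;\boldsymbol{\phi})$, using that $\epsilon^n_t$ has zero mean. Decompose
\[
x^n_t - \E[x^n_t\mid \vx^n_{t-P:t}] = \epsilon^n_t + \bigl(g(\vx^n_{t-P:t};\boldsymbol{\phi}^n) - \E[g(\vx^n_{t-P:t};\boldsymbol{\phi}^n)\mid \vx^n_{t-P:t}]\bigr),
\]
where we used \eqref{eq:markov_process} and the fact that $\E[\epsilon^n_t\mid\vx^n_{t-P:t}]=0$. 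Squaring and taking expectations, the cross term vanishes because $\epsilon^n_t$ is independent of $(\vx^n_{t-P:t},\boldsymbol{\phi}^n)$ and has zero mean. The second squared term equals, by conditioning on $\vx^n_{t-P:t}$,
\[
\E_{\vx^n_{t-P:t}}\!\left[\Var_{\boldsymbol{\phi}^n}\!\left(g(\vx^n_{t-P:t};\boldsymbol{\phi}^n)\mid \vx^n_{t-P:t}\right)\right],
\]
which is exactly the quantity in Assumption~\ref{ass:non_zero_exp} and is therefore strictly positive. So the error is $\Var(\epsilon^n_t)$ plus a strictly positive term, and the minimum $\Var(\epsilon^n_t)$ cannot be reached with any $W\le P$.

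The step needing most care is justifying the independence of $\epsilon^n_t$ from $(\vx^n_{t-P:t},\boldsymbol{\phi}^n)$. This follows from the i.i.d.\ innovation structure in \eqref{eq:markov_process}: past observations are functions of $\boldsymbol{\phi}^n$ and earlier noise terms only. A second point is to state that $\Var(\epsilon^n_t)$ is a lower bound for every predictor (conditioning on all of $\boldsymbol{\phi}^n$ and the past cannot remove $\epsilon^n_t$), so it is indeed "the minimum expected error". We also note that necessity is all that is claimed: larger $W$ need not attain the bound, but it is required.
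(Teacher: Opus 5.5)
Your proof is correct. For $W=P$ it is the paper's argument. The paper applies the law of total variance to $\Var(x^n_t \mid \vx^n_{t-P:t})$, conditioning on $\boldsymbol{\phi}^n$. It then identifies the within-$\boldsymbol{\phi}^n$ term with $\Var(\epsilon^n_t)$ and the between-$\boldsymbol{\phi}^n$ term with the quantity in Assumption~\ref{ass:non_zero_exp}. Your direct expansion of the residual $\epsilon^n_t + \bigl(g - \E[g \mid \vx^n_{t-P:t}]\bigr)$, with a vanishing cross term, is the same computation written out.

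Where you genuinely differ is the case $W<P$. The paper handles it informally: an order-$P$ process ``may require'' all $P$ lags, so a shorter window cannot ``in general'' recover the conditional mean. That remark does not use Assumption~\ref{ass:non_zero_exp}, and it does not establish a strict gap under the theorem's stated hypotheses. For instance, if every $g(\cdot;\boldsymbol{\phi})$ depended on fewer than $P$ lags while the assumption still held, it would give no conclusion. Your reduction is rigorous: $\vx^n_{t-W:t}$ is a sub-vector of $\vx^n_{t-P:t}$, so the $L^2$-optimal predictor on the coarser information cannot beat the one on the finer information. This makes the $W<P$ case a corollary of the $W=P$ case plus the assumption, giving $\text{MSE}_W \geq \text{MSE}_P > \Var(\epsilon^n_t)$ for every $W \le P$. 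The paper's route adds only the intuition about lag dependence, which the result does not need.

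Your remark that $\Var(\epsilon^n_t)$ is a lower bound for every predictor fills in a point the paper leaves implicit. The independence of $\epsilon^n_t$ from $(\vx^n_{t-P:t},\boldsymbol{\phi}^n)$ that you flag is the same modeling assumption the paper invokes.
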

\end{tcolorbox}
The proof of Theorem~\ref{prop:necessity_larger_window} is given in \Appref{proof_2}. 
The takeaway is that, regardless of how far back the true temporal dependencies go (\ie how large $P$ is), \gls{gpi} introduces an overhead, as we might need more than $P$ observations to attain the minimum possible expected error.
Indeed, as shown in \Appref{proof_2}, the optimal expected error of a predictor with window length $W \geq P$ can be written as
\begin{equation}\label{eq:residual-bias}
\E\left[\Var_{\phi^n}\!\left(\E\left[x^n_t \!\given \vx^n_{t-P:t}, \boldsymbol{\phi}^n\right] \!\given \vx^n_{t-W:t} \right)\right]+\Var\left(\epsilon^n_t\right).
\end{equation}
Here, the right term is irreducible error for any window length $W$, \ie the variance of the noise. The left term, instead, reflects the expected error due to the variance in the conditional expectation of $x^n_t$ across parameter values $\boldsymbol{\phi}^n$ that could likely have generated $\vx^n_{t-W:t}$. 
Notably, beyond accounting for temporal dependencies, using longer observation windows can enable more accurate predictions by reducing the first term in~\Eqref{eq:residual-bias}, provided additional observations contract the posterior $p\left(\boldsymbol{\phi}^n \given {}\cdot{}\right)$. 
This suggests that a key motivation for increasing window length in global models is to improve \gls{gpi}, rather than strictly to capture long-range dependencies.
In fact, in practice, achieving accurate forecasts may require an input window size $W$ which substantially exceeds the relevant temporal dependencies of the target data-generating process. 
This is driven by the uncertainty about $\boldsymbol{\phi}^n$ implied by $\boldsymbol{\gP}_{\boldsymbol{\phi}}$ and by how rapidly this uncertainty shrinks as $W$ grows.
However, alongside higher computational costs, increasing the input size often introduces additional challenges, \eg increases model complexity. This trade-off is ever more relevant when considering time series foundation models, which are trained on heterogeneous data coming from a large variety of domains.

\paragraph{Empirical analysis}
\begin{wrapfigure}[14]{r}{0.53\textwidth}
    \centering
    \vspace{-0.5cm}
    \includegraphics[width=\linewidth]{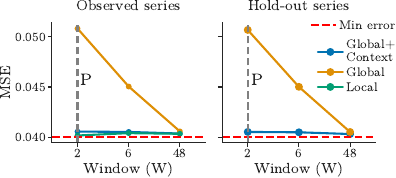}
    \vspace*{-0.25cm}
    \caption{1-step ahead forecasting error (\gls{mse}, 3 runs, $\pm std$) for \gls{nar} processes. $P=2$, $C=46$.}
    \label{fig:controlled_context_exp_single}
    \vspace*{-0.0cm}
\end{wrapfigure}
The need for a window length exceeding the temporal order of the target process can also be investigated empirically through simulations in controlled environments. 
We generate a dataset of $N\!=\!1000$ time series, each sampled from a different \gls{nar} process of order $P\!=\!2$ (see \Appref{app:synthetic_data}).
We train both global and local models (implemented as a single-layer \gls{gru}~\citep{cho2014properties}) for input windows of varying length $W$. The global model's parameters are shared across all time series, while the local model is trained from scratch on each time series. \Figref{fig:controlled_context_exp_single} reports the \gls{mse} achieved in the transductive~(left) and inductive~(right) settings (see \Appref{app:data_splitting} for details on dataset splits). Note that local models are inherently transductive-only.
As one would expect, the local model~(green curve) achieves near-optimal error (red line) already at $W\!\!=\!P\!\!=\!2$, while the global model~(orange curve) approaches it only for $W \!\gg\! P$. As each generating process has order $P\!=\!2$ by construction, we can explain the additional observations needed by the global model as the effect of uncertainty about the target data-generating process. 

In the next step, we train a variant of the global \gls{gru} model~(details in \Appref{app:gru_with_context}) which, similar to the experiment in \Figref{fig:foundation_context_bars}, takes as input a sequence obtained by concatenating a past \textit{context} $\vx^n_{t'-C:t'}$ of length $C$ and a recent \textit{window} $\vx^n_{t-W:t}$ of length $W$. During training the context's location $t'$ is drawn uniformly from $\left[t\!-\!W\!-\!S, t\!-\!W\right)$ to avoid bias towards specific temporal lags, while at test time it is fixed to $t\!-\!W\!-\!S$.
From the analysis in \Secref{sec:posterior-average}, we expect that \gls{gpi} can be addressed by leveraging any sequence generated by the target process, not only those adjacent to the target $x^n_t$. In fact, under the formulation in \Eqref{eq:markov_process}, $p\!\left(\boldsymbol{\phi}^n\given\vx^n_{t-W:t}\right)$ is invariant to time shifts. 
The blue curve in \Figref{fig:controlled_context_exp_single} corresponds to the results of this experiment, using $S\!=\!50$ as the maximum temporal gap between window and context. Notably, adding the context enables near-minimum error already at $W\!=\!P$, supporting the idea that, even if here additional observations beyond the last $P$ time lags are not relevant for their temporal relationship with $x^n_t$, they indeed serve to reduce uncertainty about the target data-generating process (additional results in \Appref{app:complement_empirical_evidence_theory}, including experiments for $P\!>\!2$). 

\subsection{The cost of \gls{gpi} in foundation models}\label{sec:heterogeneous_vs_window}
Consider the typical use case for a foundation model. The model is pretrained over multiple domains, then typically used \emph{zero-shot} by different users on a specific target domain. Although this flexibility is clearly an advantage, according to our analysis in \Secref{sec:theorem1}, it might entail a cost in terms of the required input window length, compared to a domain-specific global model.
Indeed, we expect the target data distribution for a single end user to have much narrower support and lower variance than a foundation model's pretraining distribution. However, in the absence of a mechanism to efficiently condition the foundation model on the desired target distribution, the input sequence length required to resolve uncertainty about the target process will still depend on the pretraining distribution. 
Indeed, by looking at published results, foundation models~\citep{woo2024unified, liu2025timer} require longer observation windows than global models trained directly on the target domain~\citep{nie2022time, zeng2023transformers, chen2023tsmixer, luo2024moderntcn}. The following experiments highlight this phenomenon associated with global models. 

\paragraph{Empirical analysis: controlled environments} To investigate how training on multiple domains impacts global models, we design the following controlled experiment. 
We simulate different domains by making each domain correspond to \gls{nar} processes of a specific order $P \in \{1,2,4,6\}$. For each domain, we generate a dataset of $N=500$ time series, where the parameters of the process generating each series are randomly sampled from a uniform distribution. We then train a single-layer \gls{gru}, from scratch, either on a single dataset (single-domain) or jointly on multiple datasets (multi-domain).\footnote{Both single- and multi-domain training involve multiple data-generating processes, so both settings would involve \gls{gpi}.} Finally, each model is separately evaluated on a hold-out set of unseen time series for each of the domains they where trained on. Results are summarized in \Figref{fig:multitask_controlled_exp}. Each subplot corresponds to the test performance on a specific domain (\ie processes of a specific order). Colors denote different training data distributions (either single-domain or multi-domain). Points on each curve correspond to varying the input window length hyperparameter $W$ (additional results in \Appref{app:complement_heterogeneous_vs_window}). 
The results show that, when considering performance on a specific domain for a fixed $W$, training on multiple domains consistently leads to worse performance \wrt training directly on the target domain. We attribute this to increased uncertainty about the specific process generating the observed input sequence when the training data include multiple domains. 
At the same time, increasing $W$ allows models trained jointly on multiple domains to match, or surpass, the performance of models trained on a single domain. 
We attribute this effect to a reduction of the uncertainty about the target data-generating process given longer input sequences. 

\begin{figure*}[t]
    \centering
    \includegraphics[width=\linewidth]{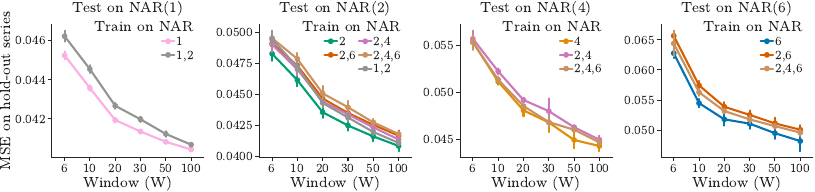}
    \vspace*{-0.5cm}
    \caption{1-step ahead forecasting (\gls{mse}, \textit{inductive}, 3 runs $\pm std$) for different combinations of \gls{nar} domains. Columns show error on a specific domain, color identifies the training domains.}
    \label{fig:multitask_controlled_exp}
    \vspace*{-0.0cm}
\end{figure*}
\begin{figure*}[t]
    \centering 
    \includegraphics[width=\linewidth]{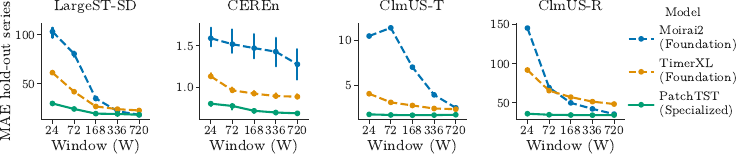}
    \vspace*{-0.3cm}
    \caption{24-step ahead forecasting error (\gls{mae}, \emph{inductive}, 3 runs $\pm std$) against window length $W$. Foundation models (Foundation, \emph{dashed}) against domain-specific global models (Specialized, \emph{solid}).}
    \label{fig:multitask_real_exp}
    \vspace*{-0.0cm}
\end{figure*}
\paragraph{Empirical analysis: real-world} We extend the analysis to real-world scenarios by considering the pretrained foundation models and datasets from \Secref{sec:posterior-average}. In particular, we compare the foundation models to domain-specific global models in \emph{inductive} settings. We consider PatchTST~\citep{nie2022time} as a representative architecture for the domain-specific models (details in \Appref{app:sota_architectures}). For each dataset, and each window lengths $W$, we train it from scratch (details in \Appref{app:training_details}) using only time series from the target dataset. Finally, for each dataset and window length $W$, we compare PatchTST's performance to that of the foundation models on time series entirely held out during training.
\Figref{fig:multitask_real_exp} reports the experiment's results. Subplots denote the dataset, line color denotes the model, and line-style denotes whether the model is foundation (dashed) or domain-specific (solid). Clearly, pretraining on a very heterogeneous data distribution results in the foundation models requiring a much larger input window to match the performance of the domain-specific models. Note that the foundation models' pretraining corpora contain time series from the considered domains.

These results confirm that the flexibility of foundation models comes at a cost, a cost clearly linked to \gls{gpi}. Additional results in \Appref{app:linear_on_linear} confirm that \gls{gpi} might require higher model capacity than \gls{cf}. In resource-constrained environments, considering these aspects is relevant and can make a domain-specific alternative preferable to a foundation model. 

\section{Leveraging insights for model design}\label{sec:methodology}
We now show how our analysis can inform model design and provide practical advantages in real-world settings.
For this purpose, we demonstrate a methodology to amortize the inference cost associated to \gls{gpi}.
Recall insights from \Secref{sec:posterior-average}: (i) \gls{gpi} can require larger observation windows than \gls{cf}; (ii) \gls{gpi} may rely on sequences generated by the target process, but not adjacent to the target $x^n_t$.
Therefore, we can try to improve model scalability by separating  \gls{gpi} and \gls{cf}.

\paragraph{Model design}
We aim at~(partially) decoupling \gls{cf} and \gls{gpi} by using predictors of the form:
\begin{subequations}\label{eq:explicit_predictor}
\begin{align}
    \ve^n &= G\!\left(\vx^n_{t'-C:t'};\,\boldsymbol{\Theta}_G\right)  \label{eq:explicit_identifier}\\
    \hat{\vx}^n_{t:t+H} &= F\!\left(\vx^n_{t-W:t},\,\ve^n;\,\boldsymbol{\Theta}_F\right) \label{eq:explicit_forecaster}
\end{align}
\end{subequations}
where $t'<t-W$ and $C > W$. Here, $G\left(\cdot\right)$ takes as input a \textit{context} $\vx^n_{t'-C,t'}$ drawn from the target time series history, and outputs a latent representation $\ve^n$; $F\left(\cdot\right)$, instead, operates on the \emph{window} $\vx^n_{t-W,t}$ containing the latest observations, and it is further conditioned on $\ve^n$.
The rationale is that $G\left(\cdot\right)$ can produce an embedding $\ve^n$, summarizing information about the process generating the $n$-th time series. This can be cached and re-used for subsequent forecasts of the same time series, reducing the inference cost compared to the standard contiguous window approach using $C\!+\!W$ observations. 
In fact, at inference time, predicting a target time series for $m$ steps would require processing ${C \!+\! m \cdot W}$ observations, instead of ${m \cdot \left(C \!+\! W\right)}$. In applications where \gls{gpi} can require much more information than \gls{cf}, \eg foundation models, we can set $C \!\gg\! W$ to increase the number of observations for \gls{gpi}, with no impact on the cost of repeated forecasts for the same time series. Similarly, if one expects \gls{gpi} to require more capacity than \gls{cf}, the architecture of $G(\cdot)$ can be more complex than that of $F(\cdot)$. Note that, if the data-generating process drifts over time, one could regularly recompute $\ve^n$ from a more recent context.

Models of the form in \Eqref{eq:explicit_predictor} can be advantageous \wrt popular architectures for global models. For non-recurrent models (\eg \acrlongpl{mlp}, \textit{encoder-only} Transformers) the computational advantage is clear, as they need to process the entire input sequence for each forecast. Most recurrent global models, instead, are based on input-patching (\eg \textit{decoder-only} Transformers), where the input window is split into multiple patches of length $L$. In this case, even if the architecture is recurrent, for a new forecast, one either needs to wait and collect $L$ new observations, or the entire input sequence must be processed from scratch. This can increase the inference cost for end users of foundation models, as these models are mostly patch-based Transformers~\citep{liang2024foundation}, but the patch-size $L$ cannot be tuned to the needs of any specific application. The formulation in \Eqref{eq:explicit_predictor} can help manage cost-to-performance trade-offs arising from \gls{gpi}, regardless of the implementation of $G\left(\cdot\right)$ and $F\left(\cdot\right)$ (further discussion in \Appref{app:foundation_caching}). 
Finally, this decoupled design naturally accommodates intermittent data streams, where long contiguous observation windows might not be available.

\begin{figure*}[t]
    \centering
    \includegraphics[width=\linewidth]{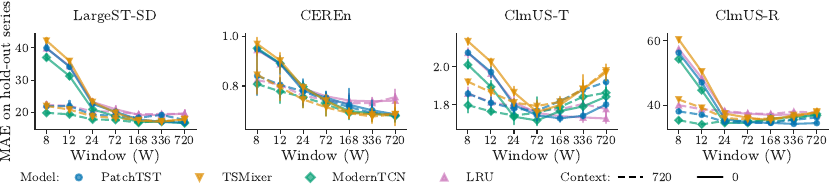}
    \vspace*{-0.25cm}
    \caption{24-step ahead forecasting error (\gls{mae}, \textit{inductive}, 3 runs $\pm std$). Color/marker denotes the model. Line-style denotes the approach: decoupling (dashed) or standard (solid).}
    \label{fig:real_world_context_exp}
    \vspace*{-0.0cm}
\end{figure*}
\paragraph{Empirical analysis}
We quantify the performance-to-cost trade-offs through evaluation on the real-world datasets introduced in \Secref{sec:posterior-average}. We implement the \textit{base model} in \Eqref{eq:explicit_forecaster} with representative \acrlong{sota} architectures: PatchTST~\citep{nie2022time} (attention), TSMixer~\citep{chen2023tsmixer} (fully connected), ModernTCN~\citep{luo2024moderntcn} (convolutional), and \gls{lru}~\citep{orvieto2023resurrecting} (recurrent); details in \Appref{app:sota_architectures}. 
The \textit{embedding module}~(\Eqref{eq:explicit_identifier}) is always implemented by PatchTST for consistency. 
Depending on the implementation of the base model, $\ve^n$ is integrated via summation or concatenation~(see \Appref{app:sota_architectures}). For each base model, hyperparameters are the same for both the standard and decoupled version. 
When training decoupled models~(\Eqref{eq:explicit_predictor}), $\vx^n_{t'-C:t'}$ is drawn uniformly from $\left[t-W-S,t-W\right)$, where $S$ specifies a maximum back-shift to prevent sampling outdated contexts. Randomizing the temporal distance between $t'$ and $t$ also prevents bias toward specific time lags. During testing, we draw the context using the maximum back-shift considered during training. Details on the experimental setting can be found in \Appref{app:training_details}.

Fig.~\ref{fig:real_world_context_exp} reports the \gls{mae} obtained when training each model to forecast the next $H\!=\!24$ time-steps. Points on the curve denote results from training with different window lengths $W$, using context length $C=720$ and maximum back-shift $S=336$. Line color denotes the base model. Solid lines denote the performance of the standard \gls{sota} architectures~(implementing \Eqref{eq:global_predictor}), while dashed lines are their counterparts with input decoupling~(\Eqref{eq:explicit_predictor}). Across the considered datasets and models, the \gls{mae} of the decoupled approach at small values of $W$, is often comparable to that achieved by the base model at larger $W$ values. For higher $W$ values, the \gls{mae} for the two strategies is similar. In such a case, $\vx^n_{t-W:t}$ might be already sufficient to specialize the model's predictions. 

\begin{wrapfigure}[23]{r}{0.52\textwidth}
    \vspace*{-0.0cm}
    \centering
    \includegraphics[width=\linewidth]{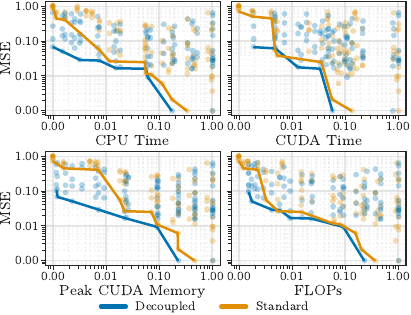}
    \vspace*{-0.25cm}
    \caption{Pareto frontiers for the decoupled (blue) and standard (orange) approach. Dots correspond to model/dataset pairs from \Figref{fig:real_world_context_exp} (3 runs average). \gls{mse} (y-axis, \textit{inductive}) is min-max normalized per-dataset, while cost metrics (x-axis) per-architecture.}
    \label{fig:pareto_merged}
    \vspace*{-0.0cm}
\end{wrapfigure} 
In \Figref{fig:pareto_merged} we report performance-cost Pareto curves comparing the standard (orange curve) and decoupled (blue curve) approach. We consider models from the previous experiment, and plot their \gls{mse} against inference speed, \gls{flops}, and memory occupancy, considering the case where $\ve^n$ is kept fixed across multiple forecasts. 
Each point in the plots corresponds to a specific architecture, dataset, window size and implementation design (either standard or decoupled). The \gls{mse} is normalized per-dataset, while cost metrics are normalized per-architecture (see \Appref{app:complement_empirical_evidence_design} for detailed per model/dataset pair results). 
We can observe how the decoupled approach is the Pareto optimal one.
Overall, these results support the practical applicability of our insights, suggesting that model scalability can be improved by considering \gls{gpi} and \gls{cf} during model design.

\paragraph{Discussion}
Ultimately, our results advocate for a change in the design perspective for time series foundation models. Instead of relying on large contiguous input windows and monolithic architectures, one could design architectures decoupling \gls{gpi} and \gls{cf}. For example, \gls{gpi} could benefit from metadata about the target time series. Lightweight downstream predictors could be tailored via reusable embeddings extracted from rich contexts. Considering \gls{gpi} and \gls{cf} distinctly can broaden the design space for time series models towards more flexible and efficient designs.

\section{Related Works}\label{sec:related-works}

The implications of training a single model on large collections of time series, rather than having models specific to each time series, have often been overlooked. 
\citet{montero2021principles} note how transductive global models need longer input windows to match the performance of local approaches. We expand their findings, leveraging \gls{gpi} to provide an explanation of the mechanisms by which increasing the window size enables better performance also in inductive scenarios (\eg foundation models~\citep{kottapalli2025foundation}). 
Recent works started exploring \gls{icl}~\citep{dong2024survey} for time series forecasting, with approaches mostly relying on exogenous information.
\citet{faw2025context} use related time series signals, \citet{auer2025zero,ansari2025chronos} adopt covariate signals, while \citet{williams2025context} feed the model textual descriptors. Notably, \citet{lu2025incontext} instead augment the model input with example past/future pairs from the target time series itself. 
In our framework, the effectiveness of these approaches can be explained with the role played by \gls{gpi}. 

Finally, in transductive learning settings, different works on \textit{global-local} models~\citep{smyl2020hybrid,cini2023taming,butera2025on} show how global models can be augmented with local components to efficiently address \gls{gpi} at training time. In this work, we address inductive settings where one cannot afford local components, and discuss how purely global models can instead leverage the input sequence to address \gls{gpi} at inference time.
See \Appref{app:related_works} for further discussion of related works.

\section{Conclusion}\label{sec:conclusions}

In this paper, we analyze key aspects of deep learning for time series forecasting, showing that effective prediction hinges on the interplay between \gls{gpi} and \gls{cf}.
Our theoretical and empirical analyses clarify that long observation windows can improve performance by improving \gls{gpi}, and show that model design choices encouraging the decoupling of the mentioned tasks can improve scalability.
Altogether, our insights can inform the next generation of time series foundation models.

\paragraph{Limitations and Future works} 
While we demonstrate the practical relevance of our insights, we leave the development and evaluation of a foundation model based on said insights as future work. 
In this regard, future research should consider more sophisticated approaches, \eg based on dynamically selecting the number of recent observations to process to balance accuracy and efficiency in \gls{cf}. 
Moreover, investigating methodologies to decouple \gls{gpi} in the context of highly time-varying data-generating processes is another interesting direction for future studies.

\bibliography{references}


\appendix

\section{Terminology}\label{app:terminology}
We distinguish here several related terms. 
\begin{itemize}
    \item \textbf{\Acrlong{gpi}} refers to inferring which latent stochastic process generated an observed sequence, without necessarily recovering parameters. 
    \item \textbf{System identification} usually denotes estimating explicit parametric models of dynamical systems, often with control in mind. 
    \item \textbf{Dynamics identification} emphasizes learning the functional form governing state transitions. 
    \item \textbf{Model identification} refers more broadly to choosing or estimating a model structure consistent with observed data. 
    \item \textbf{Process characterization} targets statistical properties of the series (e.g., stationarity, ergodicity, dependence structure) rather than the underlying dynamics.  
    \item \textbf{In-context learning} denotes the ability of a deep learning model to adapt to a new task on the fly, without parameter updates, by conditioning on examples of the target task. 
\end{itemize}
Throughout this work, we use \acrfull{gpi} in this narrow sense of reducing the uncertainty about the underlying generating process given a finite context window.

\section{Proofs}\label{app:proofs}

\subsection{Proof of Theorem~\ref{prop:necessity_larger_window}}\label{proof_2}

\begin{tcolorbox}
\textbf{\textup{[Necessity of $W>P$.]}} 
Under the formulation in \Eqref{eq:markov_process}, and Assumption~\ref{ass:non_zero_exp}, let $F\left(\vx^n_{t-W:t}; \boldsymbol{\Theta}_{opt}\right)$ denote an optimal one-step-ahead predictor (in the sense of \Eqref{eq:expected_value}) for input window length $W$. 

\smallskip
Then, a window size $W > P$ is a \textbf{necessary} condition for $F\left(\vx^n_{t-W:t}; \boldsymbol{\Theta}_{opt}\right)$ to achieve the minimum expected error $\Var\left(\epsilon^n_t\right)$.
\end{tcolorbox}

\begin{proof}
We proceed by contradiction. Assume that there exists a predictor $F\left(\cdot; \boldsymbol{\Theta}_{opt}\right)$ that achieves the minimum possible expected error on every process in $\gG$ (\Eqref{eq:markov_process}). We will show that under the $W \leq P$ assumption, such predictor does not exist.

We consider the two cases:

\textbf{Case $W < P$}. This case is immediate. Even for a fixed process, an order-$P$ Markov model may require the last $P$ observations to achieve the minimum possible expected error. Without additional assumptions guaranteeing that the conditional mean is a function of only $W$ lags, no predictor using only $\vx^n_{t-W:t}$ can, in general, recover the conditional expectation.

\textbf{Case $W = P$.}
Consider any predictor $F(\vx^n_{t-P:t};\boldsymbol{\Theta})$ that observes only the last $P$ values.
Its expected \gls{mse} under the data-generating mechanism induced by
$\boldsymbol{\phi}^n\sim \boldsymbol{\gP}_{\boldsymbol{\phi}}$ and \Eqref{eq:markov_process} is
\begin{equation}\label{eq:risk_def_P}
\mathcal R(\boldsymbol{\Theta})
\;\coloneq\;
\E
\!\left[\bigl(x_t^n - F(\vx^n_{t-P:t};\boldsymbol{\Theta})\bigr)^2\right].
\end{equation}

By \eqref{eq:expected_value}, an \gls{mse}-optimal predictor satisfies
\begin{equation}\label{eq:opt_pred_P}
F(\vx^n_{t-P:t};\boldsymbol{\Theta}_{opt})
\simeq
\E\!\left[x_t^n \mid \vx^n_{t-P:t}\right].
\end{equation}
Hence, the minimum attainable expected error among predictors is 
\begin{align}\label{eq:bayes_risk_P}
\mathcal R(\boldsymbol{\Theta}_{opt})
= \, &
\E
\!\left[\bigl(x_t^n - \E\!\left[x_t^n \mid \vx^n_{t-P:t}\right]\bigr)^2\right] \\
= \, &
\E_{\vx^n_{t-P:t}}
\!\left[
\Var\!\left(x_t^n \mid \vx^n_{t-P:t}\right)
\right].
\end{align}

We now expand the conditional variance by conditioning on the latent parameter $\boldsymbol{\phi}^n$.
By the law of total variance,
\begin{equation}\label{eq:totvar_phi_P}
\Var\!\left(x_t^n \mid \vx^n_{t-P:t}\right)
 =  
\E_{\boldsymbol{\phi}^n}\!\left[
\Var_{\epsilon_t^n}\!\left(x_t^n \mid \vx^n_{t-P:t},\boldsymbol{\phi}^n\right)
\;\middle|\; \vx^n_{t-P:t}
\right] 
\, + \,
\Var_{\boldsymbol{\phi}^n}\!\left(
\E_{\epsilon_t^n}\!\left[x_t^n \mid \vx^n_{t-P:t},\boldsymbol{\phi}^n\right]
\;\middle|\; \vx^n_{t-P:t}
\right).
\end{equation}

Under the signal-plus-noise model in \Eqref{eq:markov_process}, with $\epsilon_t^n$ independent of $\boldsymbol{\phi}^n$ and $\vx^n_{<t}$ (hence of $\vx^n_{t-P:t}$), and with time-invariant variance, the first term reduces to
\[
\Var_{\epsilon_t^n}\!\left(x_t^n \mid \vx^n_{t-P:t},\boldsymbol{\phi}^n\right)
=
\Var_{\epsilon_t^n}\!\left(\epsilon_t^n \mid \vx^n_{t-P:t},\boldsymbol{\phi}^n\right)
=
\Var(\epsilon_t^n)
\]
Substituting into \eqref{eq:totvar_phi_P} yields
\begin{equation}\label{eq:cond_var_decomp}
\Var\!\left(x_t^n \mid \vx^n_{t-P:t}\right)
=
\Var(\epsilon_t^n)
+
\Var_{\boldsymbol{\phi}^n}\!\left(\E_{\epsilon_t^n}\!\left[x_t^n \mid \vx^n_{t-P:t},\boldsymbol{\phi}^n\right]
\mid \vx^n_{t-P:t}\right).
\end{equation}
Taking $\E_{\vx^n_{t-P:t}}[\cdot]$, this gives
\begin{equation}\label{eq:risk_decomp}
\mathcal R(\boldsymbol{\Theta}_{opt})
=
\Var(\epsilon_t^n)
+
\E_{\vx^n_{t-P:t}}\!\left[
\Var_{\boldsymbol{\phi}^n}\!\left(\E_{\epsilon_t^n}\!\left[x_t^n \mid \vx^n_{t-P:t},\boldsymbol{\phi}^n\right]
\mid \vx^n_{t-P:t}\right)
\right].
\end{equation}

However, by the theorem's assumptions
\[
\E_{\vx^n_{t-P:t}}\!\left[
\Var_{\boldsymbol{\phi}^n}\!\left(\E_{\epsilon_t^n}\!\left[x_t^n \mid \vx^n_{t-P:t},\boldsymbol{\phi}^n\right]
\mid \vx^n_{t-P:t}\right)
\right]) > 0.
\]
Therefore,
\[
\mathcal R(\boldsymbol{\Theta}_{opt}) > \Var_{\epsilon_t^n}(\epsilon_t^n),
\]
so no predictor with $W=P$ can attain the minimum expected error
$\Var_{\epsilon_t^n}(\epsilon_t^n)$ for every target process.

\qedhere

\end{proof}

\section{Example with \acrshort{ar}(2)}\label{app:ar}

\begin{example}[two \acrshort{ar}(2) processes]\label{prop:single_ts_case_linear}
Consider the case where the system model in \Secref{sec:setting} is fully specified and defined as
\begin{equation}\label{eq:gen_process_ar2}
    \begin{cases}
        \, x^n_t = \phi^n_1 \cdot x^n_{t-1} + \phi^n_2 \cdot x^n_{t-2} + \epsilon^n_t \\[4pt]
        \, \epsilon^n_t \sim \gN\left(0,\sigma^2\right), \quad \boldsymbol{\phi}^n \sim Categorical\left(\left(\boldsymbol{\phi}^{n_1}, 0.5\right),\left(\boldsymbol{\phi}^{n_2}, 0.5\right)\right) \\[4pt] 
    \end{cases}
    ,\forall t,
\end{equation}
where $\boldsymbol{\phi}^n = \left(\phi^n_1, \phi^n_2\right)$. In this case, processes in $\gG$ are \gls{ar} of order $P=2$; the noise follows a zero-mean Gaussian distribution with the same variance for all possible processes, and $\boldsymbol{\gP}_{\boldsymbol{\phi}}$ is a categorical distribution assigning uniform probability to two possible parameter values $\boldsymbol{\phi}^{n_1}$ and $\boldsymbol{\phi}^{n_2}$, which we assume to correspond to stationary \gls{ar} processes.

\paragraph{$\mathbf{W>P}$ is necessary to achieve the minimum expected error for every target process:}
Let $F\left(x^n_{t-1}, x^n_{t-2}; \boldsymbol{\Theta}_{opt}\right)$ denote an optimal predictor (in
the sense of \Eqref{eq:expected_value}) for input window length $W=2$, trained via \gls{mse} minimization. 

Then, for every input window $\left(x^n_{t-1}, x^n_{t-2}\right)$, the predictor approximates the conditional expectation $\mathbb{E}_{\epsilon^n_t}\left[x^n_t \given x^n_{t-1}, x^n_{t-2}\right]$. 
By the law of total expectation, this conditional expectation can be extended by further conditioning on the latent categorical random variable $\boldsymbol{\phi}^n$.
\begin{align}
    F\left(x^n_{t-1}, x^n_{t-2}; \boldsymbol{\Theta}_{opt}\right) & \simeq  \mathbb{E}_{\epsilon^n_t}\left[x^n_t \given x^n_{t-1}, x^n_{t-2}\right] \\
    & = \mathbb{E}_{\boldsymbol{\phi}^n}\left[\E_{\epsilon^n_t}\left[x^n_t \given x^n_{t-1}, x^n_{t-2}, \boldsymbol{\phi}^n\right] \given x^n_{t-1}, x^n_{t-2} \right] \\
\end{align}
Let us define:
\begin{equation}
    \pi := p\left(\boldsymbol{\phi}^n=\boldsymbol{\phi}^{n_1} \given x^n_{t-1}, x^n_{t-2}\right) \label{eq:process_likelihood_ar2}
\end{equation}
as the posterior probability of the process parameters taking value $\boldsymbol{\phi}^{n_1}$.
The outer expectation can be expanded as follows:
\begin{align}
    F\left(x^n_{t-1}, x^n_{t-2}; \boldsymbol{\Theta}_{opt}\right) & \simeq \pi \cdot \E_{\epsilon^n_t}\left[x^n_t \given x^n_{t-1}, x^n_{t-2}, \boldsymbol{\phi}^n=\boldsymbol{\phi}^{n_1}\right] + \left(1-\pi\right) \cdot \E_{\epsilon^n_t}\left[x^n_t \given x^n_{t-1}, x^n_{t-2}, \boldsymbol{\phi}^n=\boldsymbol{\phi}^{n_2}\right] \label{eq:expansion_ar2} \\ 
    & = \pi \cdot \left(\phi^{n_1}_1 x^n_{t-1} + \phi^{n_1}_2 x^n_{t-2}\right) + \left(1-\pi\right)\cdot\left(\phi^{n_2}_1 x^n_{t-1} + \phi^{n_2}_2 x^n_{t-2}\right) \\
    & = \underbrace{\left( \pi \phi^{n_1}_1 + \left(1-\pi\right) \phi^{n_2}_1 \right)}_{\bar\phi_1} x^n_{t-1} + \underbrace{\left( \pi \phi^{n_1}_2 + \left(1-\pi\right) \phi^{n_2}_2 \right)}_{\bar\phi_2} x^n_{t-2} \label{eq:convex_comb_ar2}
\end{align}
\Eqref{eq:convex_comb_ar2} shows that an optimal predictor behaves as a convex combination of the two possible \acrshort{ar} parameter values, where the weights of the combination are dynamically adapted to the input window depending on $\pi$. 
Considering \Eqref{eq:risk_decomp} (see \Appref{proof_2}), for any target process following \Eqref{eq:gen_process_ar2}, the optimal predictor can attain the minimum possible expected error $\Var_{\epsilon^n_t}\left(\epsilon^n_t\right)$ if and only if
\begin{equation*}
    \E_{x^n_{t-1},x^n_{t-2}}\!\left[\Var_{\boldsymbol{\phi}^n}\!\left(\E_{\epsilon_t^n}\!\left[x_t^n \given x^n_{t-1},x^n_{t-2},\boldsymbol{\phi}^n\right]
\mid x^n_{t-1},x^n_{t-2}\right)
\right] = 0
\end{equation*}

This condition is only satisfied if, for all possible $\left(x^n_{t-1}, x^n_{t-2}\right)$, one of the following edge cases is met:
\begin{itemize}
    \item $\pi = 1$ and $\boldsymbol{\phi}^n=\boldsymbol{\phi}^{n_1}$ (or $\pi = 0$ and $\boldsymbol{\phi}^n=\boldsymbol{\phi}^{n_2}$)
    \item $\phi^{n_1}_1 \cdot x^n_{t-1} + \phi^{n_1}_2 \cdot x^n_{t-2} = \phi^{n_2}_1 \cdot x^n_{t-1} + \phi^{n_2}_2 \cdot x^n_{t-2}$
\end{itemize}
In the first case, input observations unequivocally determine the parameters that generated them, while, in the second case, the two possible values $\boldsymbol{\phi}^n$ can take imply the same conditional expectation for $x^n_t$. Notably, this latter case includes the trivial scenario where $\boldsymbol{\phi}^{n_1} = \boldsymbol{\phi}^{n_2}$ ($\boldsymbol{\gP}_{\boldsymbol{\phi}}$ is degenerate).
Otherwise, under input length $W=P=2$, an optimal predictor cannot attain the minimum expected error. In fact, there would exist cases where
\begin{align*}
    \boldsymbol{\phi}^n=\boldsymbol{\phi}^{n_1} \quad &\text{and} \quad F\left(x^n_{t-1}, x^n_{t-2};\boldsymbol{\Theta}_{opt}\right) \neq \phi^{n_1}_1 x^n_{t-1} + \phi^{n_1}_2 x^n_{t-2} \quad \text{or} \\ \boldsymbol{\phi}^n=\boldsymbol{\phi}^{n_2} \quad &\text{and} \quad F\left(x^n_{t-1}, x^n_{t-2};\boldsymbol{\Theta}_{opt}\right) \neq \phi^{n_2}_1 x^n_{t-1} + \phi^{n_2}_2 x^n_{t-2}.
\end{align*}

\begin{figure}[t]
    \centering
    \includegraphics[width=\linewidth]{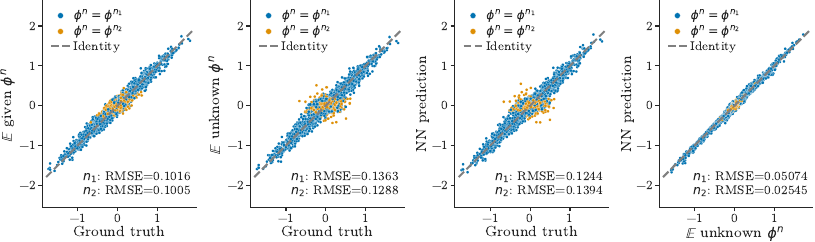}
    \caption{Parity plots for Example~\ref{prop:single_ts_case_linear}. Data generation hyper-parameters: $\sigma^2=0.1$, $seed=7$, $\#timesteps=10000$. \textit{Ground truth} denotes the actual realizations from the stochastic process. $\E$ \textit{given} $\boldsymbol{\phi}^n$ denotes the expected value for the future observation when knowing the process parameters and the last $2$ observations. $\E$ \textit{unknown} $\boldsymbol{\phi}^n$ denotes the predictions obtained by using the real process parameters weighted by the likelihood in \Eqref{eq:process_likelihood_ar2} computed in closed form. \textit{NN prediction} denotes the predictions obtained by a neural network optimized to minimize the \gls{mse}. The \gls{rmse} is computed between the two variables on each plot's axes, and reported separately for processes with $\boldsymbol{\phi}^n=\boldsymbol{\phi}^{n_1}$ and $\boldsymbol{\phi}^n=\boldsymbol{\phi}^{n_2}$.}
    \label{fig:parity_theory_ar}
\end{figure}
To show the alignment between the theory and the empirical case we generate a synthetic dataset by randomly selecting $\boldsymbol{\phi}^{n_1}$ and $\boldsymbol{\phi}^{n_2}$, and train a neural network to minimize the \gls{mse} for the $1$-step ahead prediction.
Specifically, we use a network of the following form:
\begin{align}
    \left[p^{n_1}, p^{n_2}\right] &= softmax\left(\mW_2 \sigma\left(\mW_1 \vx_{t-2:t} + b_1\right) + b_2\right),\\
    \hat{x}^n_t &= \left( p^{n_1} \psi^{n_1}_1 + p^{n_2} \psi^{n_2}_1 \right) x^n_{t-1} + \left(p^{n_1} \psi^{n_1}_2 + p^{n_2} \psi^{n_2}_2\right) x^n_{t-2},
\end{align}
with learnable parameters $\mW_1 \in \sR^{4 \times 2}$, $\mW_2 \in \sR^{2 \times 4}$, $b_1$, $b_2$, $\psi^{n_1}_1$, $\psi^{n_1}_2$, $\psi^{n_2}_1$ and $\psi^{n_2}_2 \in \sR^{1}$. Here $\sigma$ denotes the \textit{Rectified Linear Unit} activation function.

Fig.~\ref{fig:parity_theory_ar} compares predictions obtained with different methods on an hold-out set of test inputs $\left(x^n_{t-1},x^n_{t-2}\right)$. 
The rightmost plot compares predictions produced by the learned neural network with those obtained by computing \Eqref{eq:convex_comb_ar2} in closed form, showing a clear alignment.
The first three plots from the left compare the ground truth $x^n_t$ to the prediction of three different strategies: neural network (third from the left),  closed form \Eqref{eq:convex_comb_ar2} (second from the left) and computing $\E_{\epsilon^n_t}\left[x^n_t \given x^n_{t-1}, x^n_{t-2}, \boldsymbol{\phi}^n\right]$ given knowledge about the value of $\boldsymbol{\phi}^n$ for the process that generated the input (first from the left).
We can see how the first two strategies are worse (higher spread around the identity line) than computing the expected value given knowledge about $\boldsymbol{\phi}^n$.
\Appref{app:complement_theory_ar} shows additional results for different synthetic dataset seeds.
These results highlight how, in this example case: (i) a neural model can behave as \Eqref{eq:convex_comb_ar2}; (ii) an input window of length $W=P=2$ is not sufficient to achieve the minimum attainable error on both possible processes at once.

\end{example}

\section{Process identification and model capacity} \label{app:linear_on_linear}
In this section we discuss how addressing \gls{gpi} can require models with more capacity than that which would be required by \gls{cf} if the "identity" of the target process was known in advance, \eg a predictor for a single time series.

\paragraph{Linear processes} We start by considering the simple scenario of stationary linear \gls{ar} processes of order $P=2$ (which can be generalized to $P \in \sN^+$)
\begin{equation}\label{eq:linear_ar_2}
    x^n_t = \phi^n_1 \cdot x^n_{t-1} + \phi^n_2 \cdot x^n_{t-2} + \epsilon^n_t, \quad \epsilon^n_t \sim \gN\left(0, \sigma^2\right).
\end{equation}
In this case process parameters $\phi^n = \left(\phi^n_1,\phi^n_2\right)$ follow the target distribution $\boldsymbol{\gP}_{\boldsymbol{\phi}}$.
For simplicity, consider the case of a predictor with input window length $W \geq P$. The optimal output for such predictor corresponds to $\E\left[x^n_t \given \vx^n_{t-W:t}\right]$.
According to \Eqref{eq:posterior-average} we expect the optimal prediction $\E\left[x^n_t \given \vx^n_{t-W:t}\right]$ to be a non-linear function of $\vx^n_{t-W:t}$, even if the individual target processes are linear. Consider, for instance, the example in \Appref{app:ar}, where the conditional expectation involves the product of two functions of $\vx^n_{t-W:t}$ (\Eqref{eq:expansion_ar2}). However, if the target distribution $\boldsymbol{\gP}_{\boldsymbol{\phi}}$ implies only a single possible process, $\E\left[x^n_t \given \vx^n_{t-W:t}\right]$ would be linear in $\vx^n_{t-W:t}$. As such we expect a linear model to be not expressive enough to approximate $\E\left[x^n_t \given \vx^n_{t-W:t}\right]$ if multiple distinct linear \gls{ar} processes are possible under $\boldsymbol{\gP}_{\boldsymbol{\phi}}$. 

\begin{figure}[t]
    \centering
    \includegraphics[width=0.9\linewidth]{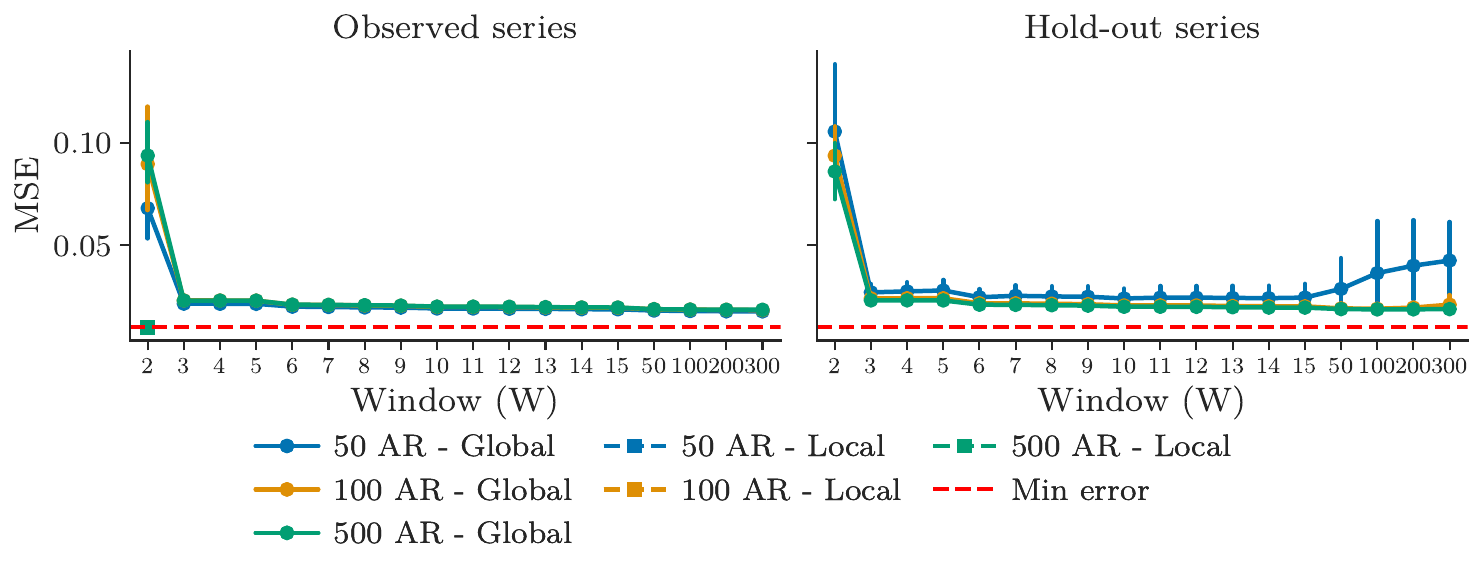}
    \caption{Performance of a linear model on data from multiple \gls{ar}(2) processes with $\sigma^2=0.1$ (\gls{mse}, 5 runs $\pm std$). For each process we generate $10000$ observations and use an $80\%-20\%$ train-test split across the temporal axis. Held-out data come from a fixed set of $200$ \gls{ar}(2).}
    \label{fig:linear_on_linear}
\end{figure}
\begin{figure}[t]
    \centering
    \includegraphics[width=0.65\linewidth]{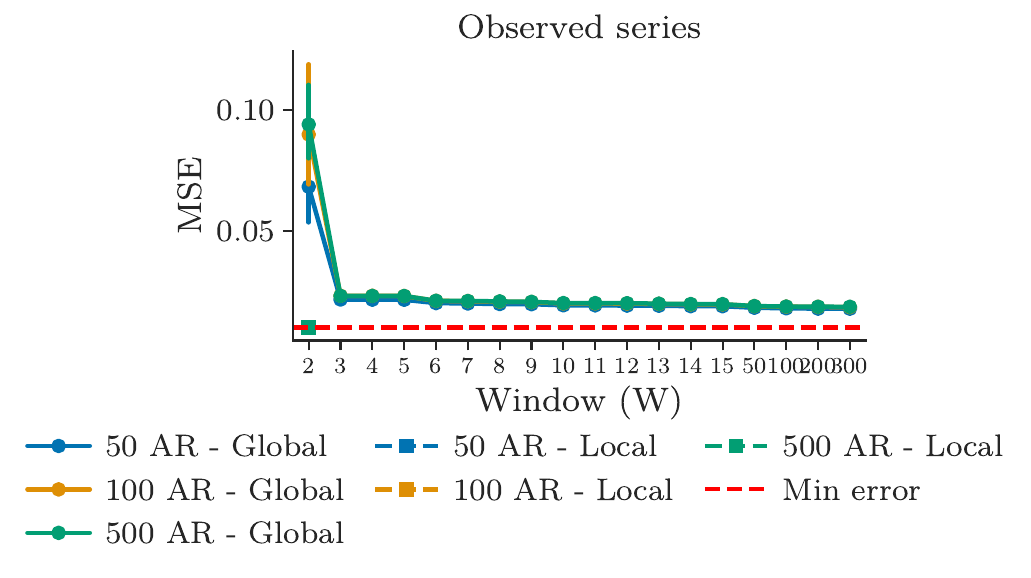}
    \caption{Performance of a linear model on data from multiple \gls{ar}(2) processes with $\sigma^2=0.1$ (\gls{mse}, 5 runs $\pm std$). For each process we generate $10000$ observations. Global models use an $80\%-20\%$ train-test split across the temporal axis. Local models are trained using only a subsequence of $300$ observations from each time series and tested on the same data as global models.}
    \label{fig:linear_on_linear_reduced}
\end{figure}
To show this effect empirically, we train\footnote{We learn the linear models by using \textit{ordinary least squares}.} both global and local linear models over a group of linear \gls{ar}(2) processes generated according to \Appref{app:synthetic_data}, considering different amounts of training processes and different input window lengths. 
The results of the experiment are reported in \Figref{fig:linear_on_linear}. As one would expect, the local approach reaches near-minimum attainable error with $W=2$ (left subplot). However, we can observe that the global one is unable to close this gap, even when the input sequence is very long ($W=300$). Increasing the number of processes in the training dataset improves inductive performance (right subplot), likely by providing a richer sample of the target distribution. However, overall the error remains higher \wrt the local approach. In line with our previous discussion, these results suggest that, even if the target family of generative processes is linear, a linear model lacks the expressivity to address \gls{gpi}. To further substantiate this, we train local models using just a subsequence of $300$ observations from each time series, corresponding the maximum input window length used for global models. \Figref{fig:linear_on_linear_reduced} shows an equivalent of the left subplot in \Figref{fig:linear_on_linear}, where local models have been learned on this reduced data amount and tested on the same data as global counterparts. Notably, local models achieve near-minimum error also in this scenario. 

\begin{figure}[t]
    \centering
    \begin{subfigure}[b]{0.49\textwidth}
        \centering
        \includegraphics[width=\textwidth]{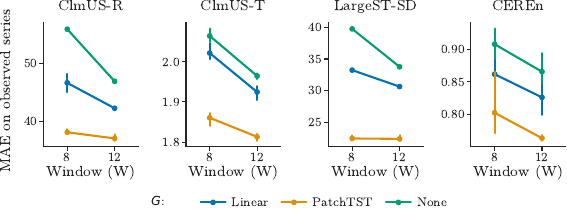}
        \caption{Transductive setting.}
        \label{fig:real_world_linear_exp_transd}
    \end{subfigure}
    \hfill
    \begin{subfigure}[b]{0.49\textwidth}
        \centering
        \includegraphics[width=\textwidth]{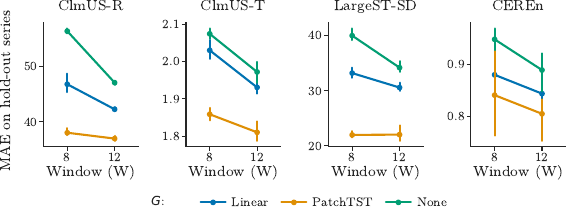}
        \caption{Inductive setting.}
        \label{fig:real_world_linear_exp_ind}
    \end{subfigure}
    \caption{24-step ahead forecasting error (\gls{mae}, 3 runs $\pm std$) for different implementations of $G\left(\cdot\right)$ (\Eqref{eq:explicit_identifier}) in the \textit{decoupled} approach of \Secref{sec:methodology}. \textit{None} refers to the \textit{standard} approach. $F\left(\cdot\right)$ (\Eqref{eq:explicit_forecaster}) is always PatchTST.}
    \label{fig:real_world_linear_exp}
\end{figure}
\paragraph{Real-world data} To expand the analysis on how addressing \gls{gpi} can require higher model capacity than \gls{cf}, we now consider real-world settings. Linear models have been shown to perform competitively \wrt complex models in transductive settings~\citep{zeng2023transformers}, but may struggle on benchmarks where the number of training time series is large~\citep{li2023revisiting}. A possible interpretation for this phenomenon is that, in many real-world tasks, a linear model may be suitable for one or few fixed time series, however they may struggle when \gls{gpi} is more complex, \eg inductive scenarios. 
To asses this we consider the decoupled modeling approach of \Secref{sec:methodology} and carry out an experiment similar to that in \Figref{fig:real_world_context_exp}. Here, however, we use a linear model, \ie a linear layer, to implement $G\left(\cdot\right)$ (\Eqref{eq:explicit_identifier}), and we compare it against implementing $G\left(\cdot\right)$ by means of PatchTST (a transformer-based neural network), or using the standard modeling approach without $G\left(\cdot\right)$ (\Eqref{eq:global_predictor}).
\Figref{fig:real_world_linear_exp} reports the results of the experiment across different datasets, when considering PatchTST as the implementation of $F\left(\cdot\right)$ (\Eqref{eq:explicit_forecaster}). 
We can observe that, while using a linear layer for $G\left(\cdot\right)$ improves performance over the standard approach, it yields consistently higher error compared to using PatchTST for $G\left(\cdot\right)$. This suggests that the complexity of \gls{gpi} could be a leading factor in determining the capacity needed by global forecasting models even in real-world settings.

\section{Datasets}\label{app:datasets}

\begin{table}[ht]
\centering
\caption{Details of the considered real-world dataset.}
\vspace{0.3cm}
\label{tab:datasets}
    \begin{tabular}{l c c c c}
    \toprule
    Dataset & \# of time series & \# of time steps & \# of channels & Sampling rate \\
    \midrule
    \gls{cer}          & 960  & 12,865 & 1  & 1 hour \\
    \gls{largest}      & 716  & 52,128 & 1  & 1 hour \\
    \gls{climate-t}    & 1000 & 26,304 & 1  & 1 hour \\
    \gls{climate-r}    & 1000 & 26,304 & 1  & 1 hour \\
    \midrule
    Electricity        & 321  & 26,304 & 1  & 1 hour \\
    Traffic            & 862  & 17,544 & 1  & 1 hour \\
    Weather            & 1   & 52,696  & 21 & 10 minutes \\
    \bottomrule
    \end{tabular}
\end{table}

\paragraph{\gls{cer}} dataset of electric load readings from~\citep{ceren}. Contains energy consumption readings from smart meters across small and medium enterprises in Ireland. Data was collected during the Smart Metering Project, supervised by the Commission for Energy Regulation (CER). Access to the data has to be requested through \url{https://www.ucd.ie/issda/data/commissionforenergyregulationcer/}. We considered a subset of the available smart meters, those for small and medium enterprises of type $3$ (SME\_CODE) with at most $1\%$ of missing readings. Data spans $\sim 1.5$ years.

\paragraph{\gls{largest}} large scale traffic dataset from~\citet{liu2023largest}. Contains traffic flow readings from loop detectors across the California state highway system. We consider the \textit{SD} sub-dataset, which contains only sensors in the San Diego area. Licensed under Attribution 4.0 International (CC BY 4.0). Data span $\sim 6$ years.

\paragraph{\gls{climate-t} and \gls{climate-r}} dataset of climate variables from satellite readings, inspired by that from \citet{de2024graph}. Contains data from the POWER Project’s Daily 2.3.5 version on 2023/02/26 and sampled from a grid of $1000$ coordinates spanning the entire surface of the United States. Visit the project's website~(\url{https://power.larc.nasa.gov/}) for further information and programmatic access to the data. In particular, \gls{climate-t} consists of hourly \textit{mean temperature} ($^{\circ}C$) readings, while \gls{climate-r} considers the \textit{allsky surface shortwave irradiance} ($W/m^2$). Data spans $3$ years.

\paragraph{Electricity} dataset of hourly electricity consumption readings from \citet{wu2021autoformer}. The dataset consists of univariate time series from $321$ customers, spanning the years from $2012$ to $2014$. The dataset is released under the MIT license.

\paragraph{Traffic} dataset of hourly road occupancy rates from \citet{wu2021autoformer}. Data is collected from the California Department of Transportation, and consists of measurements at $862$ locations across the San Francisco Bay area freeways. The data span a period of $2$ years, from $2016$ to $2018$. The dataset is released under the MIT license.

\paragraph{Weather} dataset of meteorological indicators (\eg humidity, temperature) from \citet{wu2021autoformer}. Data are recorded every $10$ minutes over the entire year of $2020$, and consists of a single multivariate time series with $21$ channels. The dataset is released under the MIT license.

\subsection{Synthetic datasets}\label{app:synthetic_data}
To build synthetic datasets, we generate time series from \gls{nar} processes of order $P$ using the following generative model:
\begin{equation}\label{eq:nar_model}
\begin{cases}
    \, x^i_t = \tanh\!\left(\sum_{p=1}^P \phi_p^i x_{t-p}^i\right) + \epsilon^i_t \\[6pt]
    \, \left[\phi^i_1,\dots,\phi^i_P\right] \sim \gU\!\left(-\tfrac{1}{\sqrt{2}}, \tfrac{1}{\sqrt{2}}\right) \\[6pt] 
    \, \epsilon^i_t \sim \gN(0, 0.2)
\end{cases}
, \; \forall \, i, t,
\end{equation}
where $\gU\left(a, b\right)$ denotes a $P$-dimensional uniform distribution over the interval $\left[a,b\right]$, while $\gN\left(\mu, \sigma^2\right)$ denotes a gaussian distribution with mean $\mu$ and variance $\sigma^2$.

When considering time series generated by \textit{linear} AR processes of order $P=2$, we drop the $\tanh$ from \eqref{eq:nar_model}.
Moreover, we sample process parameters as $\left[\phi^i_1,\phi^i_2\right] = \left[\cos\left(\alpha^i\right), \sin\left(\alpha^i\right)\right]$, with $\alpha^i \sim \gU\left(0, 2\pi\right)$, and keep only samples where $\left|\phi^i_1 - \phi^i_2\right| < 1$ and $\left(\phi^i_1 + \phi^i_2\right) < 1$. The purpose of this is to ensure the system's stability, \ie avoid collapse to $\pm \infty$, for the linear case, where the is not $\tanh$ to keep the output bounded.

\paragraph{Synthetic data scale} For the experiment in \Figref{fig:controlled_context_exp_single} we generate $10000$ time-steps for each process, across a total of $1000$ processes. Conversely, for the experiment in \Figref{fig:multitask_controlled_exp}, which involved combining multiple datasets, we generate $5000$ time-steps per process, across a total of $500$ processes per dataset. 

\subsection{Data splitting}\label{app:data_splitting}

\begin{wraptable}[10]{r}{0.6\textwidth}
\centering
\caption{Data splitting strategy.}
\label{tab:data_splitting}
\small
\resizebox{0.5\textwidth}{!}{%
\begin{tabular}{|m{0.1cm}|
                >{\centering\arraybackslash}m{4cm}
                |>{\centering\arraybackslash}m{2.5cm}
                |>{\centering\arraybackslash}m{3cm}|}
\hline
& \multicolumn{3}{c|}{\textbf{\# Timesteps}} \\ \hline
\multirow{3}{*}{\raisebox{-3.5\normalbaselineskip}[0pt][0pt]{\rotatebox{90}{\centering \textbf{\# Series}}}}
  & \parbox[c][1.25cm][c]{4cm}{\centering \cellcolor{green!30} Training Data} & \parbox[c][1cm][c]{2.5cm}{\centering \cellcolor{yellow!30} In-sample Validation}  & \parbox[c][0.7cm][c]{3cm}{\centering \cellcolor{red!30} In-sample Test} \\ \cline{2-4}
  & & \parbox[c][0.7cm][c]{2.5cm}{\centering \cellcolor{yellow!30} Out-of-sample Validation} & \\ \cline{2-4}
  & & & \parbox[c][1cm][c]{3cm}{\centering \cellcolor{red!30} Out-of-sample Test} \\ \hline
\end{tabular}
}%
\end{wraptable}
It is important to note that we split datasets for \textit{train}, \textit{validation} and \textit{test} both along the temporal axis and the 'series' axis. Meaning that we first select some time series which will be entirely held-out for \textit{inductive}, \ie out-of-sample, validation and testing. Then we take the series in the training set, and we hold-out the latest realizations for \textit{transductive}, \ie in-sample, validation and testing. Table~\ref{tab:splits_per_dataset}, reports the actual number of time series and time steps, per series, we used for each split in real-world datasets. Note that, for synthetic datasets we used a 70/10/20 train/val/test split both for the temporal and 'series' axis. Table~\ref{tab:data_splitting} visually explains the adopted splitting strategy. To reduce bias due to data splitting, experiments repeated across multiple runs have random splits along the 'series' axis controlled by the run's seed. Note that, in synthetically generated datasets, the run's seed also controls the randomized operations in data generation (\eg data-generating process parameters, noise sampling).

\begin{table}
\centering
\caption{Data splitting details for the considered real-world datasets. IS stands for \emph{in-sample}, OOS stands for \emph{out-of-sample}.}
\label{tab:splits_per_dataset}
\setlength{\tabcolsep}{5pt}
\begin{tabular}{l ccc ccc}
\toprule
 & \multicolumn{3}{c}{Number of time series} & \multicolumn{3}{c}{Number of time-steps} \\
\cmidrule(lr){2-4} \cmidrule(lr){5-7}
 & Train/IS Val/IS Test & OOS Val & OOS Test & Train & IS/OOS Val & IS/OOS Test \\
\midrule
\gls{cer}       & 672  & 96  & 192 & 8760  & 2052 & 2053 \\
\gls{largest}   & 572  & 72  & 72  & 6132  & 876  & 1752 \\
\gls{climate-t} & 700  & 100 & 200 & 17520 & 4392 & 4392 \\
\gls{climate-r} & 700  & 100 & 200 & 17520 & 4392 & 4392 \\
\midrule
Electricity     & 225  & 32  & 64  & 18414 & 2630 & 5260 \\
Traffic         & 604  & 86  & 172 & 12282 & 1754 & 3508 \\
Weather         & 21   & N/A & N/A & 36888 & 5269 & 10539\\
\bottomrule
\end{tabular}
\end{table}

Note that we use both the in-sample and out-of-sample validation sets for early-stopping, in order to avoid over-fitting the in-sample series. 
We do this for all experiments where we train a model, by taking the mean loss between the two sets, in order to avoid assigning more importance to a set over the other due to difference in number of series.

\section{Experimental details}\label{app:exp_details}

This section provides detailed information about model implementation, training and used hyper-parameters.

\subsection{Model implementation}

This section provides additional details regarding the implementation of the models we adopted in our experiments.

\subsubsection{GRU and GRU with context}\label{app:gru_with_context}

In our experiment we used the standard \gls{gru}~\citep{cho2014properties} architecture, described as
\begin{align*}
    \vr^n_{t} &= \sigma\left(\mW_{r} x^n_{t} + \vb_{r} + \mW_{r} \vh^n_{t-1} + \vb_{r}\right)) \\
    \vz^n_{t} &= \sigma\left(\mW_{z} x^n_{t} + \vb_{z} + \mW_{z} \vh^n_{t-1} + \vb_{z}\right) \\
    \vm^n_{t} &= \tanh\left(\mW_{m} x^n_{t} + \vb_{m} + \vr_{t} \odot \left(\mW_{m} \vh^n_{t-1}+ \vb_{m}\right)\right)) \\
    \vh^n_{t} &= \left(1 - \vz^n_{t}\right) \odot \vm^n_{t} + \vz^n_{t} \odot \vh^n_{t-1},
\end{align*}
where $\mW$'s and $\vb$'s denote the learnable weights of linear layers and $\sigma$ denotes the sigmoid activation function. 
In particular, note that $x^n_{t} \in \sR$ is the input signal for the $n$-th time series at time $t$, while $\vh^n_{t} \in \sR^{d_{h}}$ is the hidden state for the $n$-th time series at time $t$.
Thus we use the following shorthand notation to denote the recursive application of a \gls{gru}, of learnable parameters $\boldsymbol{\Psi}$, over a sequence $\vx^n_{t-W:t}$, with initial state $\vh_0$,
\[
\vh^n_{t} = GRU\left(\vx^n_{t-W:t}, \vh_0; \boldsymbol{\Psi}\right).
\]
Unless differently specified, $\vh_0$ is initialized to a vector of zeros.

In the experiments where only the contiguous \textit{window} over $\left[t-W,t\right)$ is present, our model can be described as
\begin{align*}
    \vh^n_{t} &= GRU\left(\vx^n_{t-W:t}, \vh_0; \boldsymbol{\Psi}\right),\\
    \hat{\vx}^n_{t:t+H} &= MLP\left(\vh^n_{t};\boldsymbol{\Phi}\right),
\end{align*}
where $MLP\left(\cdot;\boldsymbol{\Phi}\right):\sR^{d_{h}} \rightarrow \sR^{H}$ denotes a simple \gls{mlp}, with one hidden layer and hidden size $d_{h}$, mapping the last hidden state to the prediction horizon length.
Conversely, when we consider also the \textit{context} sequence, spanning $\left[t'-C, t'\right)$, with $t' < t-W$, we use a model of the following form
\begin{align*}
    \vh^n_{t'} &= GRU\left(\vx^n_{t'-C:t'}, \vh_0; \boldsymbol{\Psi}_{C}\right),\\
    \vh^n_{t} &= GRU\left(\vx^n_{t-W:t}, \vh^n_{t'}; \boldsymbol{\Psi}_{W}\right),\\
    \hat{\vx}^n_{t:t+H} &= MLP\left(\vh^n_{t};\boldsymbol{\Phi}\right).
\end{align*}
This implementation simply entails learning dedicated weights to process the \textit{window} or the \textit{context} and using the last state $\vh^n_{t'}$, obtained by processing the \textit{context}, as initial state for the \gls{gru} which will process the \textit{window}. The motivation behind this implementation is informing the model whether it is processing the latest observations preceding the target or not. Nonetheless, this can also be achieved by other means, \eg using a standard \gls{gru} and inserting a separator value between context and window.

\subsubsection{State-of-the-art architectures}\label{app:sota_architectures}

\begin{table}[ht]
\centering
\caption{Hyper-parameters used for the \gls{sota} reference architectures adopted in the paper.}
\vspace{0.3cm}
\label{tab:model_hparams}
    \begin{tabular}{l c c c c c}
    \toprule
    Model & Hidden size ($d_h$) & \# of layers & Feedforward size & \# of heads & Patch size \\
    \midrule
    PatchTST      & 16  & 3 & 128 & 4 & 16\\
    ModernTCN          & 64  & 4 & 512 & -  & -\\
    LRU           & 64 & 6 & - & - & -\\
    TSMixer    & 64 & 2 & - & -  & -\\
    \bottomrule
    \end{tabular}
\end{table}
All the \gls{sota} architectures we consider in this work follow the original paper implementation, thus, for each one we report relevant hyper-parameters (\tabref{tab:model_hparams}) alongside an explanation of how $\ve^n \in \sR^{d_e}$~(\Eqref{eq:explicit_predictor}) was given as input to $F\left(\cdot\right)$. 
Notice that in all our experiments we fix $d_e=64$.

\paragraph{PatchTST} PatchTST~\citep{nie2022time} is a standard \textit{encoder-only} transformer, which, given a sequence of length $W$, transforms it into patches of length $W_{p}$, which are then projected, through a linear layer, to size $d_h$. As such, we simply map $\ve^n \in \sR^{d_e}$ to size $d_e$, by means of a linear layer, and then sum its value to each input patch representation. For sequences that are not a multiple of the patch size we follow the original implementation which uses left-padding replicating the first value.

\paragraph{ModernTCN} ModernTCN~\citep{luo2024moderntcn} is a convolution based architecture, where the first $1D$ convolution layer maps the input signal $\vx^n_{t-W:t}$ from size $d_{x}$ to size $d_h$. Right after this initial layer, we sum $\ve^n$ to the latent representations, broadcasting along the temporal dimension. If $d_e \neq d_h$, we map it to the appropriate dimension by means of a linear layer.

\paragraph{\gls{lru}} \gls{lru}~\citep{orvieto2023resurrecting} is a recurrent architecture, so, in the same fashion as we do for the \gls{gru}~(\ref{app:gru_with_context}), we use $\ve^n$ as initial state for the recurrence. In this case, since \gls{lru} operates on complex numbers, we first map $\ve^n$ to a complex number in $\sC^{d_h}$ by means of two linear layers, one for the real and one for the imaginary part.

\paragraph{TSMixer} TSMixer~\citep{chen2023tsmixer} is the only architecture that already considered, in its design, a computation path to provide \textit{static attributes} as input to the model. Given this, we simply considered $\ve^n$ as a static attribute. We refer to the original paper for an explanation of how such attributes are integrated in the computation.

\subsubsection{Foundation models}\label{app:foundation_models}
\begin{table}[ht]
\centering
\caption{Hyper-parameters of the time series foundation models considered in the paper.}
\vspace{0.3cm}
\label{tab:foundation_models_hparams}
    \begin{tabular}{l c c c c c}
    \toprule
    Model & Hidden size & \# of layers & Feedforward size & \# of heads & Patch size \\
    \midrule
    Moirai2      & 384  & 6 & 1024 & 6 & 16\\
    TimerXL      & 1024  & 8 & 4096 & 8  & 96\\
    \bottomrule
    \end{tabular}
\end{table}
For the foundation models considered in the paper, we consider the original implementation and pre-trained weights. We report the relevant hyper-parameters in Tab.~\ref{tab:foundation_models_hparams}. Note that, when using sequence lengths that are not a multiple of the models' patch size, we adopt the padding or masking strategy adopted by the original implementations. Following is a brief description of the architectures and their pretraining data.

\paragraph{Moirai2} Moirai2~\citep{liu2025moirai} is a decoder-only Transformer with causal attention. It is pretrained on the GiftEVAL Pretrain dataset~\citep{aksu2024gift}, a large-scale dataset spanning time series from several real-world domains for a total of $\approx 230$ billion data-points (\ie individual time-steps).

\paragraph{TimerXL} TimerXL~\citep{liu2025timer} uses a decoder-only Transformer architecture with causal attention. It is pretrained on both the LOTSA dataset~\citep{woo2024unified} and the UTSD dataset~\citep{liu2024timer} for a total of $\approx 260$ billion data-points. Both datasets span time series from several real-world application domains.

\subsection{Training details}\label{app:training_details}
All models are trained using the \gls{mse} as loss function and the \emph{Adam} optimizer~\citep{kingma2014adam}, where the task is to map an input sequence of fixed length $W$ to the sequence of $H$ future values.
We train all models for $50$ epochs, with early-stopping, with patience of $8$ epochs and scheduled learning rate reduction with a factor of $0.1$ after $5$ epochs without loss improvement. 
For models trained on real-world data we fix the learning rate to $0.001$, except for \emph{ModernTCN}-based architectures, for which we set the learning rate to $0.00025$ on \gls{cer}, \gls{climate-t}, and \gls{climate-r} to avoid optimization instability.
We use a batch size of $2560$, and limit the number of train batches based on dataset size. In particular, we use $1140$ for \gls{cer}, $2880$ for \gls{largest} and $1200$ for \gls{climate-t} and \gls{climate-r}. 
For experiments with \glspl{gru} on synthetic data we use a learning rate of $0.01$.

We use sinusoidal temporal encodings with daily period for the experiments in \Secref{sec:methodology}, these encodings are commonly used to inject a notion of temporal distance among data-points in deep learning models for time series.
When \textit{context} is used (\eg \Figref{fig:foundation_context_bars}, \Figref{fig:real_world_context_exp}), the maximum back-shift allowed, during training, is $S=336$, which corresponds to context at most $2$ weeks old for the considered data with hourly frequency.
During testing the back-shift is fixed to $336$ for consistency.
The only exception is the experiment reported in \Figref{fig:controlled_context_exp_single}, where this number is set to $50$, however the choice of this parameter is not critical in such scenario. 

\paragraph{Computing resources}
We ran our experiments on A5000 and Titan V NVIDIA GPUs. 
All of our experiments fit within $24$ GBs of VRAM, but a large part of them can even run on more moderate hardware, with $12$ GBs of VRAM.
By reducing the batch size one could even stay within the very contained amount of $<8$ GBs of VRAM.
A single experiment, \ie training a single architecture on a single dataset, can take up to 4 hours, considering the largest datasets and more demanding models.
However, some experiments, like those in the controlled environments, take few minutes to be executed.
In the context of a single A5000 NVIDIA GPU, with $24$ GBs of VRAM, we estimate the total amount of compute-hours to carry out the experiments in the paper, to be $\approx 1800$h ($\approx 75$ days).

\section{Fixed context and caching}\label{app:foundation_caching}

A prominent architectural choice for foundation models is the \textit{decoder-only transformer}~\citep{vaswani2017attention}, thanks to its auto-regressive nature and the ease of handling sequences of varying length either as input or output.
This kind of architecture adopts \textit{causal-attention}, where past sequence tokens are not allowed to attend to future tokens. 
Thanks to this choice, once the key, value and query representations are computed for a token, these can be cached and re-used for future calculations, as the addition of a new subsequent token does not affect their value.

However, many foundation models for time series, \eg the considered Moirai2~\citep{woo2024unified} and TimerXL~\citep{liu2025timer}, adopt the common practice of \textit{input patching}, where the individual input token to the model is not a single time-step, but a patch of $L$ contiguous ones.
This has shown to provides several advantages~\citep{nie2022time}.
However, this means that, in the standard setting where the input to the model consists of the most recent $W$ observations, we cannot take advantage of caching, as all tokens change at each new time-step. In fact, to ensure past token would remain the same, we would have to perform inference only after $L$ new observations are collected. In contrast to domain-specific global models, which are trained for a specific target application and thus could adapt the patch size $L$ to the task's needs, foundation models cannot tune the patch size to any specific target application.

If, similarly to \Secref{sec:methodology}, we provide as input a historical sequence of length $C$, followed by a shorter window of length $W$ containing recent observations, we can take back the advantages of caching we lost due to input patching.
In fact, as the historical sequence serves for \gls{gpi}, it does not need to be updated for each new time-step, hence we can pre-compute all the relative key, value and query representations throughout all the layers in the transformer architecture. We can think of this collection of keys, values and queries as representing $\ve^n$ in \Eqref{eq:explicit_identifier}.
If $q$ is the number of fixed tokens corresponding to the historical sequence, and $d \ll q$ is the number of those that are updated with recent observations, we can reduce the computational complexity from $\bigO\left((q+d)^2\right)$ to $\bigO\left(d(q+d)\right)$, as \textit{attention} needs to be computed only between the new tokens and the historical tokens plus the new tokens.
Clearly, here the computational advantage is enabled by the ability to keep the historical sequence, and its associated tokens, fixed.

\section{Related Works}\label{app:related_works}

\paragraph{Global models.} 
Early works such as \citet{salinas2020deepar} and \citet{oreshkin2020nbeats} demonstrated the effectiveness of deep learning for time series forecasting, showing that training a single predictor across a collection of time series can outperform specialized per-series models.
\citet{montero2021principles} further showed that such \textit{global} models benefit from larger observation windows compared to \textit{local} models, as they require higher capacity to match local model's performance.
Attention-based models~\citet{vaswani2017attention} were introduced~\citep{wu2021autoformer,haoyietal2021informer,liu2022pyraformer,zhou2022fedformer}, to exploit the Transformer's ability of learning from long input sequences. Notably, \citet{nie2022time} popularized the now-standard technique of slicing input sequences into \textit{patches}, treating each patch as a token rather than individual observations.
The superior performance of these architectures has been largely attributed to their ability to capture long-range dependencies, \ie how distant past events influence future outcomes.
More recent studies, however, have shown that other architectures such as linear models~\citep{zeng2023transformers}, \glspl{mlp}~\citep{chen2023tsmixer}, and \glspl{tcn}~\citep{luo2024moderntcn} can perform on par with transformers.
Comprehensive reviews of deep learning models for time series forecasting can be found in \citet{benidis2022deep} and \citet{wang2024deep}.

\paragraph{Foundation models.} 
Recent years have seen the emergence of large-scale \textit{foundation models} for time series, trained on massive datasets spanning multiple domains. These global models exhibit strong inductive forecasting capabilities, achieving robust performance on both time series observed and not observed during training.
\citet{zhou2023one} demonstrated that pre-trained large language models can be fine-tuned for time series forecasting, while later works explored training transformer architectures from scratch on extensive time series corpora.
\citet{rasul2023lag} proposed a probabilistic decoder-only transformer, one of the few approaches that does not rely on input patching.
\citet{goswami2024moment} introduced a unified backbone capable of addressing multiple time series tasks and efficiently adapting to new series via lightweight fine-tuning.
The \textit{Moirai} family~\citep{woo2024unified,liu2025moirai} further extended the line of probabilistic foundation models with encoder-only and decoder-only architectures.
Several decoder-based models followed~\citep{das2024decoder,liu2024timer,liu2025timer}, targeting point prediction rather than probabilistic forecasting.
\citet{ansari2024chronos} drew inspiration from large language models, using a next-token prediction formulation by quantizing input patches and training via classification loss instead of regression.
A detailed overview of these approaches can be found in \citet{liang2024foundation}.

\paragraph{In-context learning.} 
In \glspl{llm}, \gls{icl} refers to the ability to improve task performance when provided with relevant examples or descriptions as part of the input. \citet{radford2019language} first demonstrated the benefits of including contextual documents, while subsequent works~\citep{chai2020description,brown2020language,wei2022finetuned} showed that textual task descriptions can further enhance performance.
The \textit{chain-of-thought} paradigm~\citep{wei2022chain}, where intermediate reasoning steps are reintroduced as input, represents another form of contextualization.
Recent studies~\citep{chen2022meta,min2022metaicl,shi2024context} have extended this principle to model training, yielding both scalability and performance gains. 
On the side of theoretical analysis, \gls{icl} has been linked to the ability of models trained on multiple tasks to implement a learning algorithm in their forward pass. In particular, \citet{xie2022an,zhang2025whatandhow} link this to Bayesian inference of latent variables characterizing the target task. \citet{akyurek2023what} discusses how transformers can implement learning algorithms like gradient descent or ordinary least squares.
Our results can be viewed through the lens of \gls{icl}: large input windows primarily supply contextual information to infer the target data-generating process, while only a smaller subset may directly influence the forecast.
Recently, some works explored \gls{icl} in the context of time series forecasting. From a theoretical perspective, \citet{li2023transformers} provides a mechanistic analysis of \gls{icl} as function learning. Their formulation includes 1-step ahead prediction of contiguous observations from dynamical systems, and they derive generalization bounds for the task. On the methodological side, \citet{faw2025context} showed that sequences from related series can enhance forecasting accuracy by enriching the model’s context. Similarly, \citet{auer2025zero} and \citet{ansari2025chronos} showed how covariate signals, \ie time series correlated to the target one, can improve accuracy, and assigned this benefit to the additional information to contextualize the task. \citet{lu2025incontext} showed one can restructure the input to a forecasting model as a series of input/output examples plus an input query and achieve better predictive performance. \citet{williams2025context}, instead, showed that textual descriptions can be used as guiding context to enhance forecasting accuracy. For a comprehensive review of \gls{icl} in \glspl{llm}, see \citet{dong2024survey}.

\paragraph{Global-local models.} 
When the set of target time series is fixed, \textit{global-local} models are an effective strategy to improve performance and lessen the computational cost, \wrt global models, by combining shared global components with series-specific local ones~\citep{butera2025on}.
\citet{nie2022time} highlighted the advantages of series-specific learnable positional encodings, while \citet{cini2023taming} demonstrated the effectiveness of learnable node embeddings in graph-based spatiotemporal forecasting.
Top entries in the M4 competition~\citep{makridakis2020m4} similarly employed global-local designs: \citet{smyl2020hybrid} integrated shared architectures with series-specific preprocessing parameters, and \citet{montero2020fforma} leveraged handcrafted features as local inputs.
Under our formulation, in purely global models, extending the input sequence can help reduce the uncertainty about the target data-generating process. In inductive settings this can improve the model's performance, playing a similar role to that of local components in transductive settings.

\paragraph{Adaptive predictors.} 
Our observations suggest that foundation models for time series should integrate two complementary components: one to infer contextual information about the observed series, and another to leverage this information for prediction.
This idea echoes the concept of \textit{adaptive neural networks}, where one sub-network modulates another’s weights or representations based on the input. Examples include \textit{fast weight programmers}~\citep{schmidhuber1992learning} and \textit{hypernetworks}~\citep{ha2017hypernetworks}.
\citet{oreshkin2021meta} theorized that, in deep forecasting architectures, early layers may act as a meta-learning outer loop~\citep{finn2017maml}, adapting representations for later predictive layers.
Although meta-learning strategies have been explored to a limited extent in time series forecasting, \eg \citet{talagala2023meta} used handcrafted features for model selection, and \citet{norton2025tailored} introduced a \textit{signal-mapper} to predict model parameters in low-data regimes, scalable adaptive predictors for large-scale foundation models remain largely unexplored.

\section{Additional results}\label{app:additional_results}

This section provides additional or complementary results for the experiments in the main paper.

\subsection{Complementary results for \Secref{sec:posterior-average}}\label{app:complement_pretrained_foundation}

\begin{figure}[t]
    \centering
    \includegraphics[width=\linewidth]{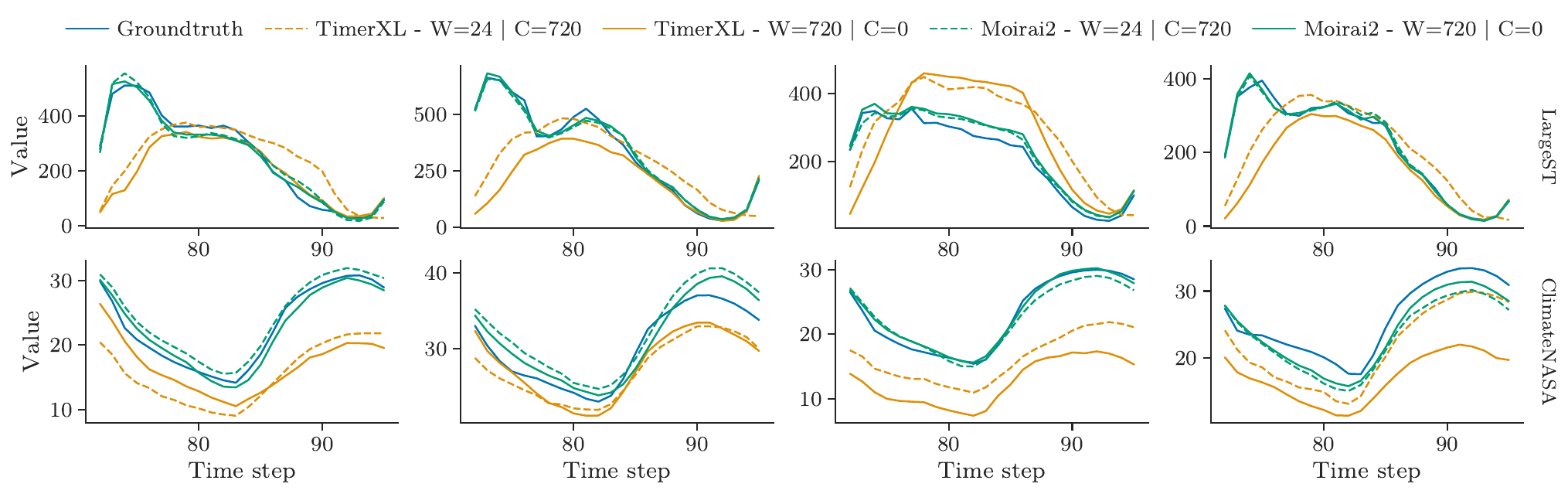}
    \caption{Example trajectories produced by foundation models (color-coded), on real-world datasets, when given as input the standard window o length $W$ (solid line) or a short window of plus context (a historical sequence) of length $C$ (dashed line).}
    \label{fig:foundation_models_trajectories}
\end{figure}

In \Figref{fig:foundation_models_trajectories} we visualize example predictions produced by the pre-trained foundation models considered in \Figref{fig:foundation_context_bars}, \ie Moirai2~\citep{woo2024unified} and TimerXL~\citep{liu2025timer}, on two real-world environments, \ie \gls{largest} and \gls{climate-t}. 
The figure visually compares the results obtained by using, as input, a standard window of length $W=720$, immediately preceding the target time-step $t$, with those obtained by concatenating a historical sequence of length $C=720$ and a short window of length $W=24$. 
We can observe how, in general, predictions have similar qualities. 
In particular we can notice that using the historical sequence sometimes leads to predictions that seems shifted upwards/downwards \wrt to those of obtained with the standard window.
As these models expect temporally contiguous input sequences, this effect might be the result of the artifact introduced by concatenating two sequences with a temporal gap between them.

\subsection{Complementary results for \ref{sec:theorem1}}\label{app:complement_empirical_evidence_theory}

\begin{figure}[t]
    \begin{subfigure}[b]{0.495\linewidth}
        \centering
        \includegraphics[width=\linewidth]{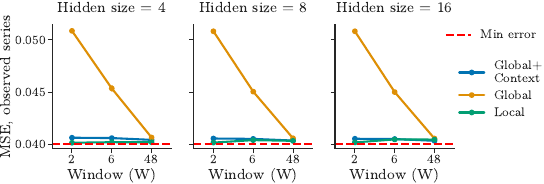}
        \caption{Transductive setting}
        \label{fig:controlled_context_exp_transd_all}
    \end{subfigure}
    \begin{subfigure}[b]{0.495\linewidth}
        \centering
        \includegraphics[width=\linewidth]{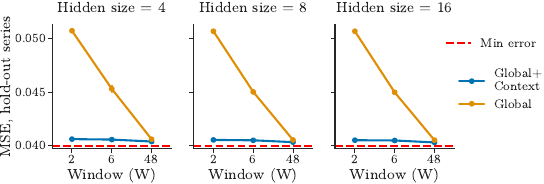}
        \caption{Inductive setting}
        \label{fig:controlled_context_exp_ind_all}
    \end{subfigure}
    \caption{1-step ahead forecasting error (\gls{mse}, 3 runs $\pm std$) for \gls{nar}(2). $C=46$. Subplots correspond to different \gls{gru} hidden size. \textit{Min error} refers to the known irreducible \gls{mse}.}
    \label{fig:controlled_context_exp_all}
\end{figure}
\begin{figure}[t]
    \begin{subfigure}[b]{0.495\linewidth}
        \centering
        \includegraphics[width=\linewidth]{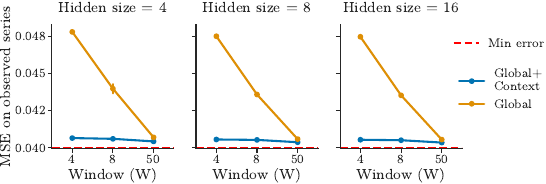}
        \caption{Transductive setting}
        \label{fig:controlled_context_exp_transd_all_order4}
    \end{subfigure}
    \begin{subfigure}[b]{0.495\linewidth}
        \centering
        \includegraphics[width=\linewidth]{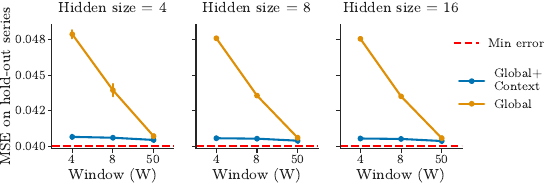}
        \caption{Inductive setting}
        \label{fig:controlled_context_exp_ind_all_order4}
    \end{subfigure}
    \caption{1-step ahead forecasting error (\gls{mse}, 3 runs $\pm std$) for \gls{nar}(4). $C=46$. Subplots correspond to different \gls{gru} hidden size. \textit{Min error} refers to the known irreducible \gls{mse}.}
    \label{fig:controlled_context_exp_all_order4}
\end{figure}
\begin{figure}[t]
    \begin{subfigure}[b]{0.495\linewidth}
        \centering
        \includegraphics[width=\linewidth]{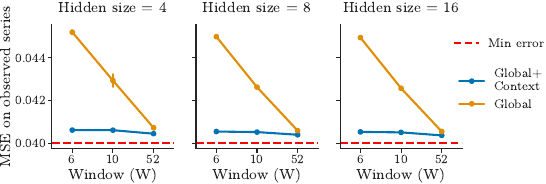}
        \caption{Transductive setting}
        \label{fig:controlled_context_exp_transd_all_order6}
    \end{subfigure}
    \begin{subfigure}[b]{0.495\linewidth}
        \centering
        \includegraphics[width=\linewidth]{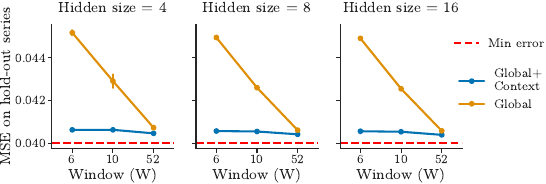}
        \caption{Inductive setting}
        \label{fig:controlled_context_exp_ind_all_order6}
    \end{subfigure}
    \caption{1-step ahead forecasting error (\gls{mse}, 3 runs $\pm std$) for \gls{nar}(6). $C=46$. Subplots correspond to different \gls{gru} hidden size. \textit{Min error} refers to the known irreducible \gls{mse}.}
    \label{fig:controlled_context_exp_all_order6}
\end{figure}
In \Figref{fig:controlled_context_exp_all} we report complementary results to those in \Figref{fig:controlled_context_exp_single}, showing how changing the model's hidden size has no significative effect on the results. 
In particular, \Figref{fig:controlled_context_exp_transd_all}, reports the error obtained on the future realizations of the time series observed during training (transductive setting).
We can observe similar results to those obtained on the entirely held-out time series (inductive setting), which are reported in \Figref{fig:controlled_context_exp_ind_all}.

In addition, \Figref{fig:controlled_context_exp_all_order4} and \Figref{fig:controlled_context_exp_all_order6} report results when repeating the experiment with \gls{nar} processes with order $P=4$ and $P=6$ respectively, where we omitted local models for simplicity. The evaluated window lengths $W$ where incremented according to the process increased temporal order, to avoid providing the model with less than $P$ observations as input. We can observe similar results to the case $P=2$, where providing the additional context sequence improves the model performance already when $W=P$. Notably, in this case, the predictors gap from the minimum attainable error sees a relative increase with the increment of $P$, possibly stemming from the need for additional input observations to infer the process generating the input.

\subsection{Complementary results for \Secref{sec:heterogeneous_vs_window}}\label{app:complement_heterogeneous_vs_window}

\begin{figure}[t]
    \centering
    \includegraphics[width=\linewidth]{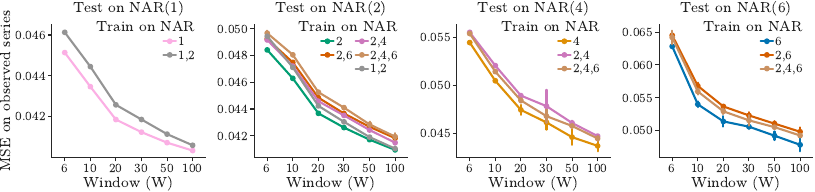}
    \caption{1-step ahead forecasting (\gls{mse}, \textit{transductive}, 3 runs $\pm std$) for different combinations of \gls{nar} domains. Columns show error on a specific domain, color identifies the training domains.}
    \label{fig:multitask_controlled_exp_transd}
\end{figure}

\Figref{fig:multitask_controlled_exp_transd} reports complementary \wrt \Figref{fig:multitask_controlled_exp}.
While the latter shows the error on held-out series (inductive setting), the former shows the error on the future, unobserved, realizations of the time series used for training (transductive setting).
Overall, we can observe a similar behavior between the two different testing scenarios.

\subsection{Complementary results for \Secref{sec:methodology}}\label{app:complement_empirical_evidence_design}

\begin{figure}[t]
    \centering
    \includegraphics[width=\linewidth]{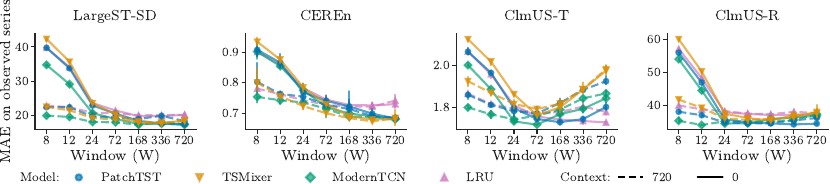}
    \caption{24-step ahead forecasting error (\gls{mae}, \textit{transductive}, 3 runs $\pm std$). Color/marker denotes the model. Line-style denotes the approach: decoupling (dashed) or standard (solid).}
    \label{fig:real_world_context_exp_transd}
\end{figure}

Here, we report complementary results showing the error on the future, unobserved, realizations of the time series used for training (transductive setting), whereas the main paper reports the error on held-out series (inductive setting).
In particular, \Figref{fig:real_world_context_exp_transd} is complementary to \Figref{fig:real_world_context_exp}.
We can see the presence of similar trends, further validating our analysis in the main paper.

\begin{figure}[t]
    \centering
    \includegraphics[width=\linewidth]{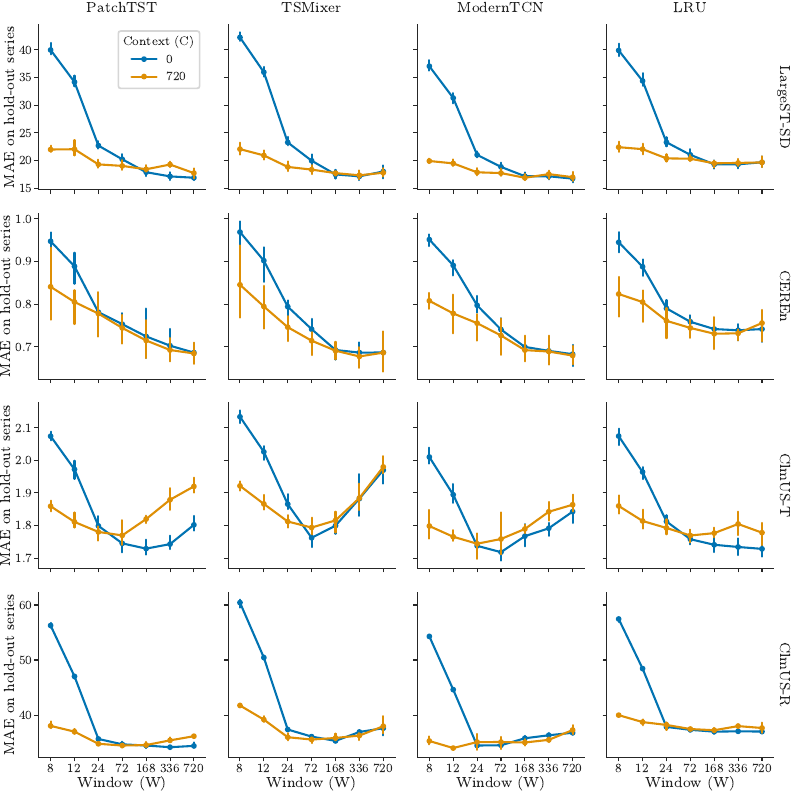}
    \caption{24-step ahead forecasting error (\gls{mae}, \textit{inductive}, 3 runs $\pm std$). Columns correspond to different models while rows represent different datasets. Line-color denotes decoupling (orange) or not (blue).}
    \label{fig:real_world_context_exp_full}
\end{figure}
\begin{figure}[t]
    \centering
    \includegraphics[width=\linewidth]{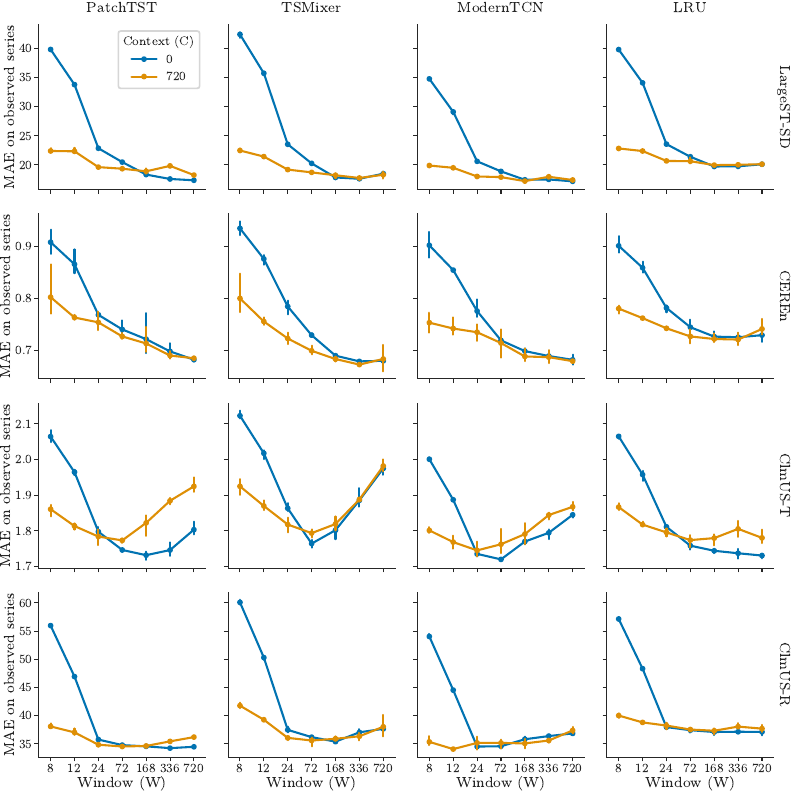}
    \caption{24-step ahead forecasting error (\gls{mae}, \textit{transductive}, 3 runs $\pm std$). Columns correspond to different models while rows represent different datasets. Line-color denotes decoupling (orange) or not (blue).}
    \label{fig:real_world_context_exp_full_transd}
\end{figure}

In addition, we report expanded variants of \Figref{fig:real_world_context_exp_transd} and \Figref{fig:real_world_context_exp}, respectively in \Figref{fig:real_world_context_exp_full_transd} and \Figref{fig:real_world_context_exp_full}.
These larger figures report separate sub-plots for each model and dataset pair, in order to enhance readability.

\begin{figure}[t]
    \centering
    \includegraphics[width=\linewidth]{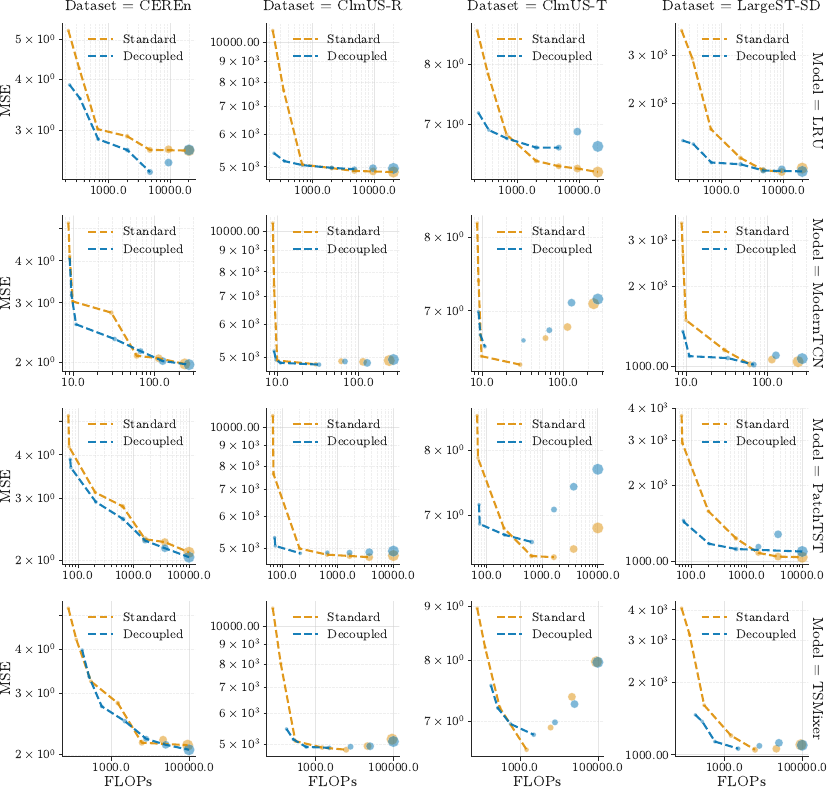}
    \caption{Pareto frontiers, \textit{inductive} \gls{mse} (y-axis) vs \gls{flops} (MFLOPs, x-axis), for the decoupled (blue) and standard (orange) approach. Dots correspond to model/dataset pairs from \Figref{fig:real_world_context_exp} (3 runs ±std). Dot size is proportional to the input window length $W$. Rows correspond to different models, columns to different datasets.}
    \label{fig:pareto_flops}
\end{figure}
\begin{figure}[t]
    \centering
    \includegraphics[width=\linewidth]{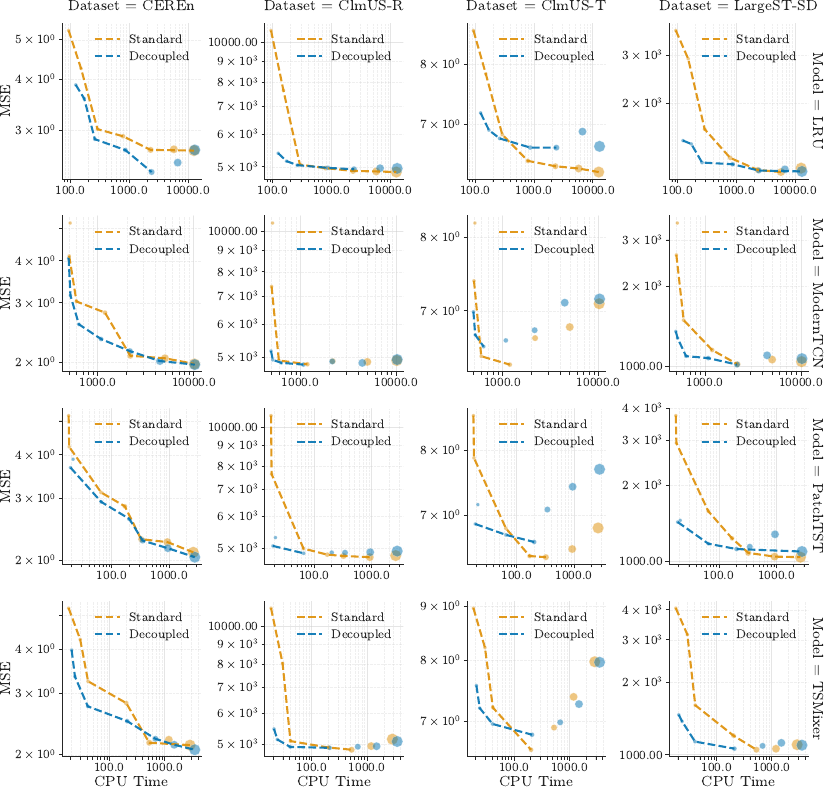}
    \caption{Pareto frontiers, \textit{inductive} \gls{mse} (y-axis) vs CPU inference time (ms, x-axis), for the decoupled (blue) and standard (orange) approach. Dots correspond to model/dataset pairs from \Figref{fig:real_world_context_exp} (3 runs ±std). Dot size is proportional to the input window length $W$. Rows correspond to different models, columns to different datasets.}
    \label{fig:pareto_cputime}
\end{figure}
\begin{figure}[t]
    \centering
    \includegraphics[width=\linewidth]{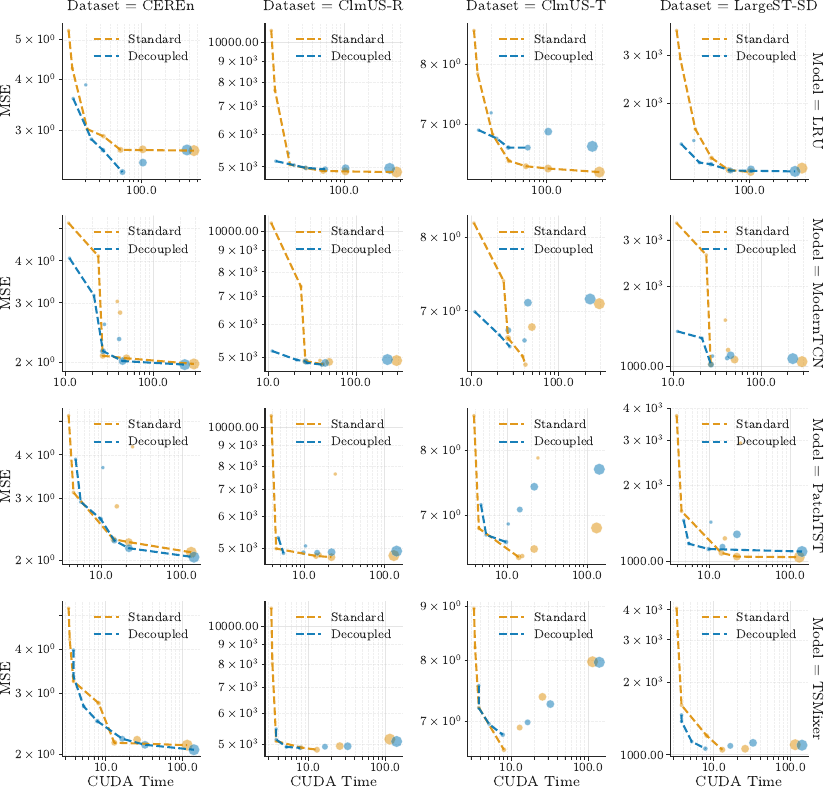}
    \caption{Pareto frontiers, \textit{inductive} \gls{mse} (y-axis) vs GPU inference time (ms, x-axis), for the decoupled (blue) and standard (orange) approach. Dots correspond to model/dataset pairs from \Figref{fig:real_world_context_exp} (3 runs ±std). Dot size is proportional to the input window length $W$. Rows correspond to different models, columns to different datasets.}
    \label{fig:pareto_cudatime}
\end{figure}
\begin{figure}[t]
    \centering
    \includegraphics[width=\linewidth]{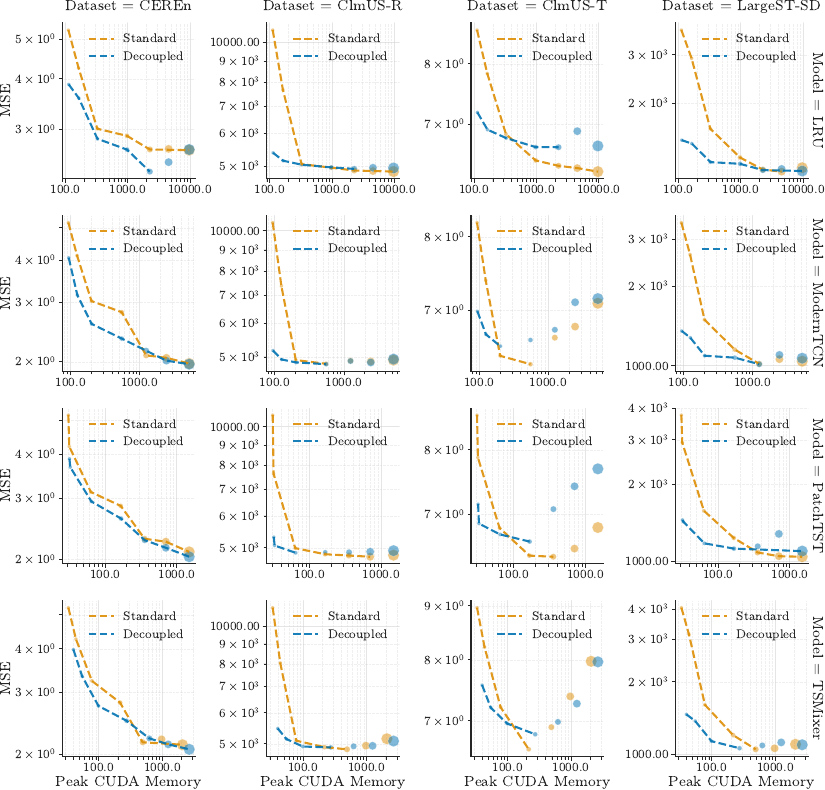}
    \caption{Pareto frontiers, \textit{inductive} \gls{mse} (y-axis) vs peak GPU memory occupancy (MB, x-axis), for the decoupled (blue) and standard (orange) approach. Dots correspond to model/dataset pairs from \Figref{fig:real_world_context_exp} (3 runs ±std). Dot size is proportional to the input window length $W$. Rows correspond to different models, columns to different datasets.}
    \label{fig:pareto_memory}
\end{figure}

Similarly, to complement \Figref{fig:pareto_merged}, we report performance-cost Pareto frontiers for individual architectures and datasets. \Figref{fig:pareto_flops} reports \gls{flops}, \Figref{fig:pareto_cputime} inference time on CPU, \Figref{fig:pareto_cudatime} inference time on GPU and \Figref{fig:pareto_memory} the peak GPU memory occupancy. Notably, results are similar to the aggregated plot in \Figref{fig:pareto_merged}. In most scenario the decoupled approach dominates the standard one at lower computational costs, which correspond to shorter input windows. As one would expect, given our analysis on the effect of \gls{gpi}, at higher costs (larger $W$) the Pareto frontiers often align.

\paragraph{Speedup within accuracy loss budget}\label{app:speedup_budget}
\begin{table}[t]
\centering
\caption{Computational advantage for standard and decoupled approaches when allowing up to 10\% relative \gls{mae} performance cost \wrt the best model using the standard approach. The table shows the speedup factors for CPU time, CUDA time, FLOPs, and peak CUDA memory. Best results are in bold.}
\label{tab:speedup_on_threshold_0.1_train_mse_test_inductive_mae}
\resizebox{0.8\linewidth}{!}{%
\begin{tabular}{llc|cccc}
\toprule
Model & Dataset & Approach & CPU T ($\times$) & CUDA T ($\times$) & FLOPs ($\times$) & CUDA Mem. ($\times$) \\
\midrule
\multirow[t]{8}{*}{LRU} & \multirow[t]{2}{*}{\gls{cer}} & Standard & 19.88 & 4.95 & 13.98 & 13.94 \\
 &  & \textbf{Decoupled} & \textbf{33.50} & \textbf{7.43} & \textbf{27.05} & \textbf{27.62} \\
\cline{2-7}
 & \multirow[t]{2}{*}{\gls{climate-r}} & Standard & 8.00 & 2.61 & 6.99 & 6.97 \\
 &  & \textbf{Decoupled} & \textbf{18.69} & \textbf{2.73} & \textbf{19.93} & \textbf{20.01} \\
\cline{2-7}
 & \multirow[t]{2}{*}{\gls{climate-t}} & Standard & 43.61 & 21.70 & 29.96 & 29.85 \\
 &  & \textbf{Decoupled} & \textbf{101.91} & \textbf{22.69} & \textbf{85.41} & \textbf{85.71} \\
\cline{2-7}
 & \multirow[t]{2}{*}{\gls{largest}} & Standard & 2.94 & 1.64 & 2.33 & 2.32 \\
 &  & \textbf{Decoupled} & \textbf{8.86} & \textbf{2.32} & \textbf{6.88} & \textbf{6.94} \\
\cline{1-7} \cline{2-7}
\multirow[t]{8}{*}{ModernTCN} & \multirow[t]{2}{*}{\gls{cer}} & Standard & 8.55 & 6.96 & \textbf{8.06} & \textbf{9.45} \\
 &  & Decoupled & \textbf{9.39} & \textbf{7.14} & 7.32 & 9.43 \\
\cline{2-7}
 & \multirow[t]{2}{*}{\gls{climate-r}} & Standard & 1.00 & 1.00 & 1.00 & 1.00 \\
 &  & \textbf{Decoupled} & \textbf{1.21} & \textbf{3.51} & \textbf{1.09} & \textbf{2.13} \\
\cline{2-7}
 & \multirow[t]{2}{*}{\gls{climate-t}} & Standard & 1.98 & 1.08 & 2.98 & 2.74 \\
 &  & \textbf{Decoupled} & \textbf{2.39} & \textbf{3.78} & \textbf{3.25} & \textbf{5.85} \\
\cline{2-7}
 & \multirow[t]{2}{*}{\gls{largest}} & Standard & 4.66 & \textbf{11.00} & 3.92 & 4.19 \\
 &  & \textbf{Decoupled} & \textbf{16.00} & 10.51 & \textbf{21.85} & \textbf{25.75} \\
\cline{1-7} \cline{2-7}
\multirow[t]{8}{*}{PatchTST} & \multirow[t]{2}{*}{\gls{cer}} & \textbf{Standard} & \textbf{15.49} & 8.41 & \textbf{15.62} & \textbf{9.02} \\
 &  & Decoupled & 12.94 & \textbf{13.38} & 15.49 & 8.96 \\
\cline{2-7}
 & \multirow[t]{2}{*}{\gls{climate-r}} & Standard & 14.22 & \textbf{4.87} & 17.95 & 11.11 \\
 &  & \textbf{Decoupled} & \textbf{48.36} & 2.10 & \textbf{49.25} & \textbf{21.54} \\
\cline{2-7}
 & \multirow[t]{2}{*}{\gls{climate-t}} & Standard & 4.96 & \textbf{3.13} & 7.91 & 5.62 \\
 &  & \textbf{Decoupled} & \textbf{15.24} & 2.95 & \textbf{22.32} & \textbf{11.20} \\
\cline{2-7}
 & \multirow[t]{2}{*}{\gls{largest}} & \textbf{Standard} & \textbf{8.09} & \textbf{9.28} & \textbf{6.18} & \textbf{4.18} \\
 &  & Decoupled & 7.61 & 9.01 & 6.16 & 4.17 \\
\cline{1-7} \cline{2-7}
\multirow[t]{8}{*}{TSMixer} & \multirow[t]{2}{*}{\gls{cer}} & Standard & 5.88 & 3.17 & 14.56 & 4.49 \\
 &  & \textbf{Decoupled} & \textbf{29.11} & \textbf{4.94} & \textbf{37.64} & \textbf{9.82} \\
\cline{2-7}
 & \multirow[t]{2}{*}{\gls{climate-r}} & \textbf{Standard} & 12.76 & \textbf{3.39} & \textbf{20.18} & \textbf{6.29} \\
 &  & Decoupled & \textbf{12.90} & 2.50 & 10.68 & 4.96 \\
\cline{2-7}
 & \multirow[t]{2}{*}{\gls{climate-t}} & Standard & 4.90 & \textbf{2.12} & 4.88 & 2.78 \\
 &  & \textbf{Decoupled} & \textbf{9.72} & \textbf{2.12} & \textbf{8.21} & \textbf{5.42} \\
\cline{2-7}
 & \multirow[t]{2}{*}{\gls{largest}} & Standard & 2.26 & 1.98 & 3.52 & 1.98 \\
 &  & \textbf{Decoupled} & \textbf{29.11} & \textbf{4.94} & \textbf{37.64} & \textbf{9.82} \\
\cline{1-7} \cline{2-7}
\bottomrule
\end{tabular}
}
\end{table}
To complement the reported Pareto frontiers, we also provide a different perspective on the performance-cost trade-off. For each dataset and model reported in \Figref{fig:real_world_context_exp}, we consider, as reference, the minimum \gls{mae} achieved by the \textit{standard} approach across different input lengths $W$. Then, for both the standard and \textit{decoupled} approach, we compute the attainable improvement factors in computational performance when allowing a penalty of up to $10$\% of the reference \gls{mae}. \Tabref{tab:speedup_on_threshold_0.1_train_mse_test_inductive_mae} reports the computational improvement factors in terms of \gls{flops}, inference time on CPU, inference time on GPU and peak GPU memory occupancy. Notably, in most scenarios the decoupled approach allows significant scalability  within this $10$\% accuracy budget.

\paragraph{Additional time series benchmarks}
\begin{figure}[t]
    \begin{subfigure}[b]{0.609\linewidth}
        \centering
        \includegraphics[width=\linewidth]{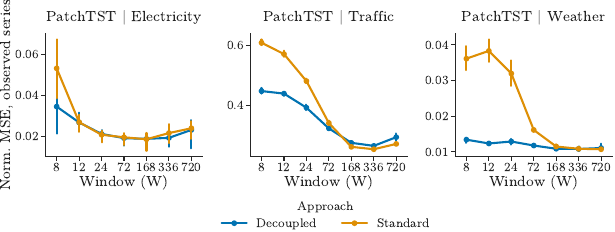}
        \caption{Transductive setting}
        \label{fig:bench_decoupled_transductive}
    \end{subfigure}
    \begin{subfigure}[b]{0.39\linewidth}
        \centering
        \includegraphics[width=\linewidth]{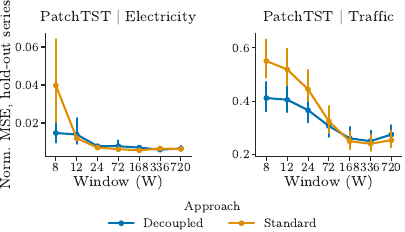}
        \caption{Inductive setting}
        \label{fig:bench_decoupled_inductive}
    \end{subfigure}
    \caption{96-step ahead forecasting error (normalized \gls{mse}, 3 runs $\pm std$) for \emph{electricity}, \emph{traffic} and \emph{weather} datasets. Standard (orange) vs decoupled (blue) approach. $C=720$.}
    \label{fig:bench_decoupled}
\end{figure}
We conduct experiments comparing the standard and decoupled approach introduced in \Secref{sec:methodology} on the \emph{electricity}, \emph{traffic} and \emph{weather} datasets~\citep{wu2021autoformer}, considering the case where both the \emph{base model} (\Eqref{eq:explicit_forecaster}) and the \emph{embedding module} (\Eqref{eq:explicit_identifier}) are implemented by PatchTST. 
\Figref{fig:bench_decoupled} reports the results for the transductive~(\Figref{fig:bench_decoupled_transductive}) and inductive~(\Figref{fig:bench_decoupled_inductive}) settings. Results are consistent with our analysis and the results observed in \Figref{fig:real_world_context_exp}. The advantage provided by the decoupled approach depends on the target time series distribution, \eg on the \emph{electricity} dataset the advantage is only for the smallest input window considered, possibly as few observation are already sufficient for \gls{gpi} in this task. Note that for \emph{weather} we report only the transductive performance as the dataset involves only $21$ time series measuring different weather variables, hence the data are not appropriate to consider an hold-out set on which to measure inductive performance.

\subsection{Complementary results for Example~\ref{prop:single_ts_case_linear}}\label{app:complement_theory_ar}

\begin{figure}[t]
    \centering
    \includegraphics[width=\linewidth]{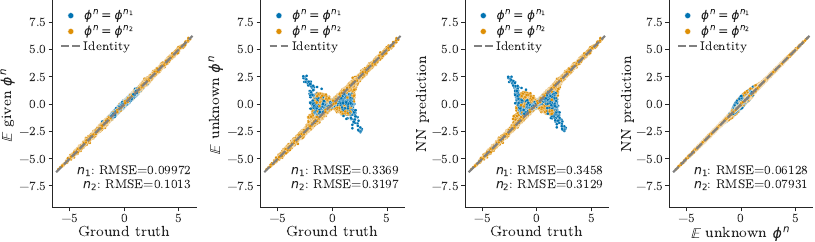}
    \caption{Parity plots for Example~\ref{prop:single_ts_case_linear}. Data generation hyper-parameters: $\sigma^2=0.1$, $seed=19$, $\#timesteps=100000$. \textit{Ground truth} denotes the actual realizations from the stochastic process. $\E$ \textit{given} $\boldsymbol{\phi}^n$ denotes the expected value for the future observation when knowing the process parameters and the last $2$ observations. $\E$ \textit{unknown} $\boldsymbol{\phi}^n$ denotes the predictions obtained by using the real process parameters weighted by the likelihood in \Eqref{eq:process_likelihood_ar2} computed in closed form. \textit{NN prediction} denotes the predictions obtained by a neural network optimized to minimize the \gls{mse}. The \gls{rmse} is computed between the two variables on each plot's axes, and reported separately for processes with $\boldsymbol{\phi}^n=\boldsymbol{\phi}^{n_1}$ and $\boldsymbol{\phi}^n=\boldsymbol{\phi}^{n_2}$.}
    \label{fig:parity_theory_ar_2}
\end{figure}
\begin{figure}[t]
    \centering
    \includegraphics[width=\linewidth]{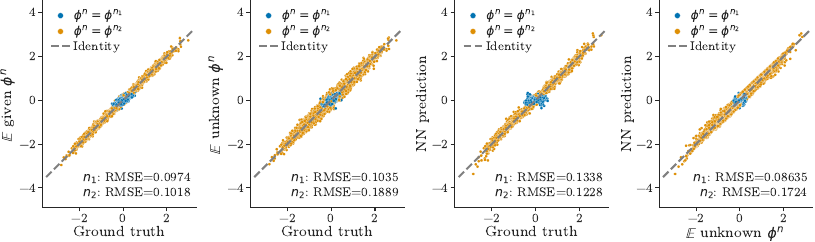}
    \caption{Parity plots for Example~\ref{prop:single_ts_case_linear}. Data generation hyper-parameters: $\sigma^2=0.1$, $seed=42$, $\#timesteps=10000$. \textit{Ground truth} denotes the actual realizations from the stochastic process. $\E$ \textit{given} $\boldsymbol{\phi}^n$ denotes the expected value for the future observation when knowing the process parameters and the last $2$ observations. $\E$ \textit{unknown} $\boldsymbol{\phi}^n$ denotes the predictions obtained by using the real process parameters weighted by the likelihood in \Eqref{eq:process_likelihood_ar2} computed in closed form. \textit{NN prediction} denotes the predictions obtained by a neural network optimized to minimize the \gls{mse}. The \gls{rmse} is computed between the two variables on each plot's axes, and reported separately for processes with $\boldsymbol{\phi}^n=\boldsymbol{\phi}^{n_1}$ and $\boldsymbol{\phi}^n=\boldsymbol{\phi}^{n_2}$.}
    \label{fig:parity_theory_ar_3}
\end{figure}
\begin{figure}[t]
    \centering
    \includegraphics[width=\linewidth]{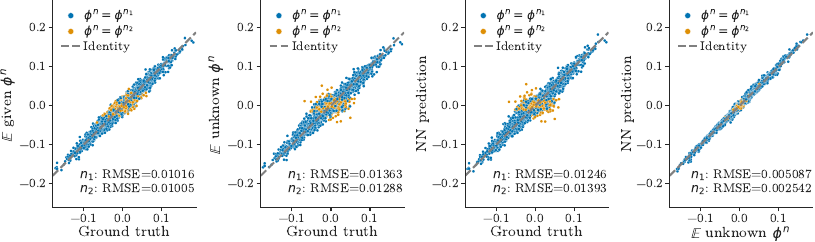}
    \caption{Parity plots for Example~\ref{prop:single_ts_case_linear}. Data generation hyper-parameters: $\sigma^2=0.01$, $seed=7$, $\#timesteps=10000$. \textit{Ground truth} denotes the actual realizations from the stochastic process. $\E$ \textit{given} $\boldsymbol{\phi}^n$ denotes the expected value for the future observation when knowing the process parameters and the last $2$ observations. $\E$ \textit{unknown} $\boldsymbol{\phi}^n$ denotes the predictions obtained by using the real process parameters weighted by the likelihood in \Eqref{eq:process_likelihood_ar2} computed in closed form. \textit{NN prediction} denotes the predictions obtained by a neural network optimized to minimize the \gls{mse}. The \gls{rmse} is computed between the two variables on each plot's axes, and reported separately for processes with $\boldsymbol{\phi}^n=\boldsymbol{\phi}^{n_1}$ and $\boldsymbol{\phi}^n=\boldsymbol{\phi}^{n_2}$.}
    \label{fig:parity_theory_ar_4}
\end{figure}
\begin{figure}[t]
    \centering
    \includegraphics[width=\linewidth]{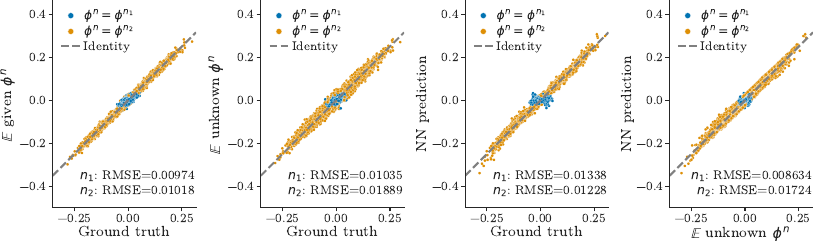}
    \caption{Parity plots for Example~\ref{prop:single_ts_case_linear}. Data generation hyper-parameters: $\sigma^2=0.01$, $seed=42$, $\#timesteps=10000$. \textit{Ground truth} denotes the actual realizations from the stochastic process. $\E$ \textit{given} $\boldsymbol{\phi}^n$ denotes the expected value for the future observation when knowing the process parameters and the last $2$ observations. $\E$ \textit{unknown} $\boldsymbol{\phi}^n$ denotes the predictions obtained by using the real process parameters weighted by the likelihood in \Eqref{eq:process_likelihood_ar2} computed in closed form. \textit{NN prediction} denotes the predictions obtained by a neural network optimized to minimize the \gls{mse}. The \gls{rmse} is computed between the two variables on each plot's axes, and reported separately for processes with $\boldsymbol{\phi}^n=\boldsymbol{\phi}^{n_1}$ and $\boldsymbol{\phi}^n=\boldsymbol{\phi}^{n_2}$.}
    \label{fig:parity_theory_ar_5}
\end{figure}
To complement the results shown in \Figref{fig:parity_theory_ar} we provide, in Figures~\ref{fig:parity_theory_ar_2},~\ref{fig:parity_theory_ar_3},~\ref{fig:parity_theory_ar_4} and~\ref{fig:parity_theory_ar_5}, results obtained under different data generation hyper-parameters. Specifically we change the random seed to sample the two \gls{ar}(2) process parameters, the value of $\sigma^2$, and the number of generated time steps. Results are similar, showing alignment between the learned neural network predictions and those of the closed-form optimal predictor for an input window of length $W=2$. 

\end{document}